\pgfplotsset{compat=newest}
\newcommand{\abs}[1]{\lvert#1\rvert}
\newcommand{\Card}[1]{\left\lvert#1\right\rvert}
\newcommand{\Set}[1]{\left\{#1\right\}}
\newcommand{\Esymb}{\mathbb{E}}
\newcommand{\Psymb}{\mathbb{P}}
\DeclareMathOperator*{\E}{\Esymb}
\DeclareMathOperator*{\ProbOp}{\Psymb}
\renewcommand{\Pr}{\ProbOp}
\newcommand\bdot\bullet
\DeclareMathOperator{\poly}{poly}
\newcommand{\eg}{e.g.,\xspace}
\newcommand{\N}{\mathbb N}
\newcommand{\ldimop}{\mathrm{Ldim}}
\newcommand{\ldim}[1]{\ldimop\br{#1}}
\newcommand{\soa}{\operatorname{SOA}}
\newcommand{\dpsoa}{\operatorname{DP-SOA}}
\newcommand{\hyp}{\cH}
\newcommand{\point}
{\mathrm{Point}}
\renewcommand{\line}{\mathrm{Line}}
\newcommand{\alg}
{\cA}
\newcommand{\regret}
{M}
\newcommand{\inpseq}{\bm{c}}
\newcommand{\xeq}{x_{\mathrm{eq}}}
\newcommand{\xdif}{x_{\mathrm{dif}}}
\newcommand{\conc}[1]{\operatorname{#1-concentrated}}
\newcommand{\cA}{\mathcal A}
\newcommand{\cB}{\mathcal B}
\newcommand{\cC}{\mathcal C}
\newcommand{\cD}{\mathcal D}
\newcommand{\cH}{\mathcal H}
\newcommand{\cM}{\mathcal M}
\newcommand{\cT}{\mathcal T}
\newcommand{\cX}{\mathcal X}
\renewcommand{\leq}{\leqslant}
\renewcommand{\geq}{\geqslant}
\let\epsilon=\varepsilon
\numberwithin{equation}{section}
\newcommand\MYcurrentlabel{xxx}
\newcommand{\MYstore}[2]{%
  \global\expandafter \def \csname MYMEMORY #1 \endcsname{#2}%
  %{#2}
}
\newcommand{\MYload}[1]{%
  \csname MYMEMORY #1 \endcsname%
}
\newcommand{\MYnewlabel}[1]{%
  \renewcommand\MYcurrentlabel{#1}%
  \MYoldlabel{#1}%
}
\newcommand{\MYdummylabel}[1]{}
\newcommand{\torestate}[1]{%
  % overwrite label command
  \let\MYoldlabel\label%
  \let\label\MYnewlabel%
  #1%
  \MYstore{\MYcurrentlabel}{#1}%
  % restore old label command
  \let\label\MYoldlabel%
}
\newcommand{\restatetheorem}[1]{%
  % overwrite label command with dummy
  \let\MYoldlabel\label
  \let\label\MYdummylabel
  \begin{theorem*}[Restatement of \cref{#1}]
    \MYload{#1}
  \end{theorem*}
  \let\label\MYoldlabel
}
\newcommand{\restatelemma}[1]{%
  % overwrite label command with dummy
  \let\MYoldlabel\label
  \let\label\MYdummylabel
  \begin{lemma*}[Restatement of \cref{#1}]
    \MYload{#1}
  \end{lemma*}
  \let\label\MYoldlabel
}
\newcommand{\restateprop}[1]{%
  % overwrite label command with dummy
  \let\MYoldlabel\label
  \let\label\MYdummylabel
  \begin{proposition*}[Restatement of \cref{#1}]
    \MYload{#1}
  \end{proposition*}
  \let\label\MYoldlabel
}
\newcommand{\restatefact}[1]{%
  % overwrite label command with dummy
  \let\MYoldlabel\label
  \let\label\MYdummylabel
  \begin{fact*}[Restatement of \cref{#1}]
    \MYload{#1}
  \end{fact*}
  \let\label\MYoldlabel
}
\newcommand{\restate}[1]{%
  % overwrite label command with dummy
  \let\MYoldlabel\label
  \let\label\MYdummylabel
  \MYload{#1}
  \let\label\MYoldlabel
}
\newcommand{\e}{\epsilon}
\newcommand{\eps}{\epsilon}
\title{On the Growth of Mistakes in Differentially Private Online Learning: A Lower Bound Perspective}
\date{}
\newacronym{dp}{DP}{Differential Privacy}
\newacronym{ml}{ML}{Machine Learning}
\newacronym{pac}{PAC}{Probably Approximately Correct}
\newacronym{erm}{ERM}{Empirical Risk Minimisation}
\newacronym{dpsoa}{DP-SOA}{Differentially Private Standard Optimal Algorithm}
\newacronym{soa}{SOA}{Standard Optimal Algorithm}
\newacronym{ldim}{\(\ldimop\)}{Littlestone dimension}
\author[1]{Daniil Dmitriev}
\author[1]{Krist\'of Szab\'o}
\author[2]{Amartya Sanyal}
\affil[1]{ETH Zurich}
\affil[2]{University of Copenhagen}
\begin{document}

\maketitle

\begin{abstract}%
  In this paper, we provide lower bounds for Differentially Private (DP) Online Learning algorithms. Our result shows that, for a broad class of~\(\br{\epsilon,\delta}\)-DP online algorithms, for number of rounds \(T\) such that \(\log T\leq \bigO{1 / \delta}\), the expected number of mistakes incurred by the algorithm grows as \(\Omega\br{\log T}\). This matches the upper bound obtained by~\citet{golowich2021littlestone} and is in contrast to non-private online learning where the number of mistakes is independent of \(T\). 
  To the best of our knowledge, our work is the first result towards settling lower bounds for DP--Online learning and partially addresses the open question in~\citet{dpopsanyal22a}.
\end{abstract}

\section{Introduction}\label{sec:intro}
\glsresetall
With the increasing need to protect the privacy of sensitive user data while conducting meaningful data analysis,~\Gls{dp}~\citep{dwork2006calibrating} has become a popular solution.~\Gls{dp} algorithms ensure that the impact of any single data sample on the output is limited, thus safeguarding individual privacy. Several works have obtained~\Gls{dp} learning algorithms for various learning problems in both theory and practice.

However, privacy does not come for free and often leads to a statistical (and sometimes computational) cost. The classical solution for non-private~\Gls{pac} learning~\citep{valiant1984theory} is via~\Gls{erm} that computes the best solution on the training data. Several works~\citep{bassily2014private,chaudhuri2011differentially} have shown that incorporating DP into~\Gls{erm} incurs a compulsory statistical cost that depends on the dimension of the problem. In the well-known setting of~\Gls{pac} learning with~\Gls{dp},~\citet{kasiviswanathan2011can} provided the first guarantees that all finite VC classes can be learned with a sample size that grows logarithmically in the size of the class. This line of research was advanced by subsequent works~\citep{beimel2013characterizing,feldman2014sample,beimel2014bounds}, resulting in the findings of~\citet{alon2022private} which established a surprising equivalence between non-private online learning and~Approximate~\Gls{dp}-\Gls{pac} learning.

Unlike the setting of~\Gls{pac} learning, Online learning captures a sequential game between a learner and an adversary. The adversary knows everything about the learner's algorithm except its random bits. In this work we consider a setting where, for a known hypothesis class \(\hyp\), the adversary chooses a sequence of data points \(\bc{x_1,\ldots,x_t}\) and the target hypothesis \(f^*\in\hyp\) prior to engaging with the learner. Then, the adversary reveals these data points one by one to the learner, who must offer a prediction for each. After each prediction, the adversary reveals the true label for that point. The learner's performance is evaluated by comparing the incurred mistakes against the theoretical minimum that could have been achieved by an optimal hypothesis in hindsight. Known as the \emph{realisable oblivious mistake bound} model, the seminal work of~\citet{littlestone1988learning} showed that i) the number of mistakes incurred by any learner is lower-bounded by the Littlestone dimension~(more precisely, \(\nicefrac{\ldim{\hyp}}{2}\)) of the target class \(\hyp\) and ii) there is an algorithm that makes at most~\(\ldim{\hyp}\) mistakes. This algorithm is commonly referred to as the~\Gls{soa}.

Recall that certain problem classes possess finite Vapnik-Chervonenkis (VC) dimensions but infinite Littlestone dimensions (such as the one-dimensional threshold problem). This, together with the equivalence between non-private online learning and ~\Gls{dp}-\Gls{pac} learning~\citep{alon2022private} implies that there exists a fundamental separation between DP-PAC learning and non-private PAC learning. In other words, some learning problems can be solved with vanishing error, as the amount of data increases, in~\Gls{pac} learning but will suffer unbounded error in~\Gls{dp}-\Gls{pac} learning. This implication was first proven for pure~\Gls{dp} by~\citet{feldman2014sample} and later for approximate~\Gls{dp} by~\citet{alon2019private}. With the debate on the sample complexity of approximate~\Gls{dp}-\Gls{pac} learning resolved, we next ask whether a similar gap exists between online learning with~\Gls{dp} and non-private online learning.~\citet{golowich2021littlestone} addressed this by introducing the~\Gls{dpsoa}, which suffers a mistake count, that increases logarithmically with the number of rounds \(T\) compared to a constant error rate in non-private online learning~\citep{littlestone1988learning}. This difference suggests a challenge in~\Gls{dp} online learning, where errors increase indefinitely as the game continues. The question of whether this growing error rate is an unavoidable aspect of DP-online learning was posed as an open question by~\citet{dpopsanyal22a}.

\paragraph{Main Result} In this work, we provide evidence that this additional cost is inevitable. Consider any hypothesis class \(\cH\) and for a learning algorithm \(\alg\). Let \(\bE\bs{\regret_{\alg}}\) be the expected number of mistakes incurred by \(\alg\) and let \(T\) be the total number of rounds for which the game is played.

We obtain a lower bound on \(\bE\bs{M_{\cA}}\) under some assumptions on the learning algorithm \(\cA\). Informally, we say an algorithm \(\cA\) is \(\conc{\beta}\)~(see~\Cref{defn:beta-conc} for a formal definition) if there is some output sequence that it outputs with probability at least \(1-\beta\) in response to a~\emph{non-distinguishing} input sequence. A~\emph{non-distinguishing} input sequence is a~(possibly repeated) sequence of input data points such that there exists some \(f_1,f_2\in\hyp\) which cannot be distinguished just by observing their output on the non-distinguishing input sequence. We prove a general statement for any hypothesis class in~\Cref{thm:main-finite} but show a informal corollary below.

\begin{corollary}[Informal Corollary of~\Cref{thm:main-finite}]\label{thm:informal-small}
     There exists a hypothesis class \(\cH\) with \(\ldim{\cH}=1\) (see~\cref{defn:littlestone}), such that for any \(\e, \delta > 0, T \leq \exp(1/(32\delta))\), and any online learner \(\alg\) that is \((\e, \delta)\)-\Gls{dp} and \(0.1\)-concentrated, there is an adversary, such that 
    \begin{equation}
        \E\bs{\regret_{\alg}} =\widetilde\Omega\br{\frac{\log T}{\e}},
        \end{equation}
    where \(\widetilde \Omega\) hides logarithmic factors in \(\e\). For \(T > \exp(1/(32\delta))\), \(\E\bs{\regret_{\alg}} = \widetilde\Omega\br{1 / \delta}\).
\end{corollary}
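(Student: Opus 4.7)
My plan is to combine the $\beta$-concentration hypothesis with a group-privacy attack on a hypothesis class $\cH$ satisfying $\ldim{\cH}=1$. First, I would pick $\cH$ so that it supports a rich family of non-distinguishing input sequences witnessed by pairs $f_0,f_1\in\cH$, while also offering a sufficient supply of queries at which those two hypotheses disagree. The construction must be chosen carefully, because a naive singleton-style class admits the trivial predict-zero strategy, which is $(0,0)$-DP and vacuously $0$-concentrated yet has constant mistake count against the natural adversary; so $\cH$ has to be designed so that every $(\epsilon,\delta)$-DP and $0.1$-concentrated algorithm is forced to commit to a $z^*$ that can be invalidated by a realisable adversary. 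I would then pin down this canonical output $z^*$ by invoking $\beta$-concentration on a non-distinguishing input $(S_0,y^{(0)})$ of length $T$ with labels from $f_0$, giving $\Pr[\cA(S_0,y^{(0)})=z^*]\ge 1-\beta=0.9$.

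The heart of the argument is a switching adversary that deforms $(S_0,y^{(0)})$ into an input realised by $f_1$ by flipping the labels at a set $P\subseteq[T]$ of $k$ carefully chosen positions (positions at which $f_0$ and $f_1$ disagree and at which $z^*$ happens to match $y^{(0)}$). Group privacy then yields
\[
\Pr[\cA(S_0,y^{(1)})=z^*] \;\ge\; e^{-k\epsilon}(1-\beta) \;-\; \delta\,\frac{1-e^{-k\epsilon}}{e^{\epsilon}-1}.
\]
The regime $T\le\exp(1/(32\delta))$, equivalent to $\delta\le 1/(32\log T)$, is precisely what keeps the additive term strictly smaller than half of $e^{-k\epsilon}(1-\beta)$ for $k$ as large as $\Theta(\log T/\epsilon)$, so the probability of outputting $z^*$ on the switched input remains nontrivial. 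Whenever $\cA$ does output $z^*$ on this input, it makes a mistake at every position in $P$. Once $T$ exceeds $\exp(1/(32\delta))$ the additive term saturates and the argument can only push $k$ up to $\Theta(1/\delta)$, matching the second regime in the claim.

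The main obstacle I anticipate is two-fold. First, the one-shot calculation above only yields $\widetilde\Omega(k/T)$ expected mistakes, not $\widetilde\Omega(k)$; to amplify, I would organise the attack as a hybrid chain of $\Theta(\log T)$ intermediate inputs, each at group-privacy distance $\Theta(1/\epsilon)$ from the next, where at each link the DP constraint forces the algorithm to refuse to adapt with constant probability and thereby contribute $\Omega(1)$ expected mistakes, summing telescopically to the full $\Omega(\log T/\epsilon)$ bound. Second, and more fundamentally, ensuring that every intermediate labelling along such a chain remains realisable in a class with $\ldim{\cH}=1$ is the technical crux, since $\ldim=1$ severely restricts the pairwise disagreement structure. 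This is where I expect the construction of $\cH$ to be tailored to the hybrid chain itself rather than the other way around, and where the $\widetilde\Omega$ (as opposed to $\Omega$) in the statement comes from.
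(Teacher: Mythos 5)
Your setup matches the paper's starting point: concentration pins down a canonical behaviour on the non-distinguishing input, and you correctly observe that a single group-privacy step over $k$ flips becomes vacuous once $k\e \gtrsim \log(1/\beta)$ (this is exactly the paper's ``Case I'', which by itself only yields $\Omega(\log(1/\beta)/\e)$ mistakes), so that some amplification over $\Theta(\log T)$ stages is needed. But the proposal stops exactly where the real difficulty begins. You assert that each link of your hybrid chain ``forces the algorithm to refuse to adapt with constant probability and thereby contribute $\Omega(1)$ expected mistakes,'' without a mechanism, and with the flipped positions fixed in advance the assertion is false: a DP, concentrated learner that outputs $1$ at $\xdif$ only after observing $\Theta(1/\e)$ copies of $(\xdif,1)$ defeats any serial or pre-announced insertion pattern with $O(1/\e)$ total mistakes. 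The missing idea — and the heart of the paper's proof — is that the adversary places each new point \emph{where the algorithm least expects it}: it maintains a nested interval $[l_i,r_i]$, splits it in half, uses the one-coordinate DP inequality between $\inpseq^{(i)}$ and $\inpseq^{(i+1)}$ to compare the probabilities of ``firing'' in the left versus the right half, and recurses into the half with the smaller firing probability. This preserves the invariants $p_i \ge 1/2 - 4i\delta$ and $q_i \le \frac{1}{10}(3/4)^i + 2\delta$, so each of the $k$ inserted points is a mistake with probability at least $1/4$, and the halving is precisely what both limits and achieves $k=\Theta(\log T)$ rounds (the condition $T\le \exp(1/(32\delta))$ enters by keeping $k\delta$ small, not by controlling the additive term of a single group-privacy step as in your sketch). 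Without this distribution-dependent binary-search placement, the telescoping you describe does not go through.

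A secondary point: the realisability issue you identify as the ``technical crux'' is a non-issue. Taking $\cH=\point_N$ with $N\gtrsim T^2$ (so that any concentrated learner admits a distinguishing tuple $(f_0,f_1,\xeq,\xdif)$, as the paper shows for learners valued in $\point^K_N$), the adversary only ever uses the two inputs $\xeq$ and $\xdif$ with target $f^*=f_1$; since $f_0(\xeq)=f_1(\xeq)$, every intermediate sequence in the chain is labelled consistently with $f_1$ and is trivially realisable. The $\widetilde\Omega$ in the statement does not arise from realisability constraints but from extending the base case $\e_0=\log(3/2)$ to general $\e$ (inserting roughly $\e_0/\e$ points per step for small $\e$, or splitting each interval into roughly $e^{\e}$ parts for large $\e$), which costs only factors logarithmic in $\e$.
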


 While the above result uses a hypothesis class of Littlestone dimension one, our main result in~\Cref{sec:finite_horizon_result} also holds for any hypothesis class, even with Littlestone dimension greater than one. Utilising the \(\point_N\) hypothesis class (see~\cref{defn:point-class}) in~\Cref{thm:informal-small}, we demonstrate that the minimum number of mistakes a~\Gls{dp} online learner must make is bounded below by a term that increases logarithmically with the time horizon \(T\). This holds if the learning algorithm is concentrated at least when \(T\) is less than or equal to \(\exp\br{1 / (32\delta)}\). This contrasts with non-private online learning, where the number of mistakes does not increase with \(T\) in hypothesis classes with bounded Littlestone dimension, even if the learner is concentrated.\footnote{All deterministic algorithms are \(\conc{0}\) by definition.} Our result also shows that the analysis of the algorithm of~\citet{golowich2021littlestone}, which shows an upper bound of~\(\Omega\br{\log \br{T/\delta}}\) for~\Gls{dpsoa} is tight as long as \(T\leq\exp\br{1 / (32\delta)}\). However, as illustrated in~\Cref{fig:lower-bound}, this is not a limitation as for larger \(T\), since a simple~\emph{Name \& Shame} algorithm incurs lesser mistake than~\Gls{dpsoa} albeit at vacuous privacy levels (see discussion after~\Cref{thm:main-finite}).

In fact, the assumption of concentrated learners is not overly restrictive given that known~\Gls{dp}-online learning algorithms exhibit this property, as detailed in~\Cref{sec:example-conc}. Notably, the~\Gls{dpsoa} presented by~\citet{golowich2021littlestone} which is the sole DP online learning method known to achieve a mistake bound \(\bigO{\log\br{T}}\), is concentrated as shown in~\Cref{lem:dpsoa-conc}. This suggests that the lower bound holds for all potential~\Gls{dp} online learning algorithms.

Additionally, we extend our result to another class of~\Gls{dp} online algorithms, which we refer to as~\emph{uniform firing} algorithms, that are in essence juxtaposed to concentrated algorithms.  These algorithms initially select predictors at random until a certain
confidence criterion is met, prompting a switch to a consistent predictor—this transition, or `firing', is determined by the flip of a biased coin (with bias \(p_t\)), where the likelihood of firing increases with each mistake. However, the choice of how \(p_t\) increases and the selection of the predictor upon firing depend on the algorithm's design. For this specific type of algorithms, particularly in the context of learning the \(\point_3\) hypothesis class, Proposition~\ref{prop:firing-lb} establishes a lower bound on mistakes that also grows logarithmically with \(T\).

\Cref{sec:continual} discusses Continual Observation~\citep{dwork2010boosting,chan2011private}, another popular task within sequential~\Gls{dp}. We show that results on~\Gls{dp}~Continual Counters can be used to derive upper bounds in the online learning setting. Nonetheless, it is not clear whether lower bounds for that setting can transferred to~\Gls{dp}-Online learning. In addition, these upper bounds suffer a dependence on the hypothesis class size. 

Finally, we point out that, to the best of our knowledge we are unaware of any algorithms in the literature for pure~\Gls{dp} online learning. Our lower bound in~\Cref{thm:informal-small} immediately provides a lower bound for pure~\Gls{dp}. Similarly, DP Continual Counters provide a method for achieving upper bounds, specifically for the \(\point_N\) classes, albeit with a linear dependency on \(N\). Obtaining tight upper and lower bounds remains an interesting direction for future research.

\section{Preliminaries}\label{sec:prelim}
We provide the necessary definitions for both Online Learning and Differential Privacy. 
\subsection{Online Learning} We begin by defining the online learning game between a learner \(\alg\) and an adversary \(\cB\). Let \(T \in \N_+\) denote number of rounds and let \(\cX\) be some domain.
\begin{defn}[General game]
\label{def:general}
    Let  \(\hyp \subseteq \Set{0, 1}^{\cX}\) be a hypothesis class of functions from \(\cX\) to \(\Set{0, 1}\). The game between a learner \(\alg\) and adversary \(\cB\) is played as follows:
\begin{mdframed}[nobreak=true]
\begin{itemize}
    \item adversary \(\cB\) picks \(f^{\ast} \in \hyp\) and a sequence \(x_1, \ldots, x_T \in \cX\)
    \item \textbf{for} \(t = 1, \ldots, T\):
    \begin{itemize}
        \item learner \(\alg\) outputs a current prediction \(\hat f_t \in \Set{0, 1}^{\cX}\) (see~\cref{def:proper})
        \item \(\alg\) receives \((x_t, f^{\ast}(x_t))\)
    \end{itemize}
    \item \textbf{let} \(\regret_{\alg}  = \sum_{t = 1}^T \bI\bc{\hat f_t(x_t) \neq f^{\ast}(x_t)}\)
\end{itemize}
\end{mdframed}
\end{defn}
In this work, we study the following min-max problem:
\begin{equation}\label{eq:mistake-bound}
    \min_{\alg}\max_{\cB} \E \bs{\regret_{\alg}} = \min_{\alg} \max_{\cB} \sum_{t = 1}^T \Pr\bs{\hat f_t(x_t) \neq f^{\ast}(x_t)},
\end{equation}
where probability is taken over the randomness in \(\alg\). We refer to the random variable \(\regret_{\alg}\) as the mistake count of \(\cA\). The optimal mistake count of this game can be characterised by a combinatorial property of the hypothesis class \(\hyp\), called \emph{Littlestone dimension}, first shown by~\citet{littlestone1988learning}.

\paragraph{Littlestone dimension}

To define~\emph{Littlestone dimension}, we need the concept  of \emph{mistake tree}.  A mistake tree \(\cT\) is a complete binary tree, where  each internal node \(v\) corresponds to some \(x_v \in \cX\). Each root-to-leaf path of the tree -- denoted as \(x_1, x_2, \ldots, x_d, x_\mathrm{leaf}\) -- is associated with a label sequence \(y_1, \ldots, y_d\), 
where \(y_i = \bI\bc{x_{i+1} \text{ is the right child of }x_i}\).
 
We say that \(\cT\) is \emph{shattered} by \(\hyp\), if 
for every possible root-to-leaf labeled path \(((x_1, y_1), \ldots, (x_d, y_d))\), there exists \(f \in \hyp\), such that \(f(x_i) = y_i\) for all \(i \in [d]\). This concept leads us to the formal definition of the Littlestone dimension as follows:
\begin{defn}\label{defn:littlestone}
    Littlestone dimension \(\br{\ldim{\hyp}}\) of a hypothesis class \(\hyp\) is defined as the maximum depth of any mistake tree that can be shattered by  \(\hyp\).
\end{defn}
% \begin{proposition}[\cite{littlestone1988learning}]

    \citet{littlestone1988learning} proved that for any hypothesis class \(\hyp\), there exists a deterministic learner, called Standard Optimal Algorithm~(\Gls{soa})
    such that \(\regret_{\mathrm{\Gls{soa}}} \leq \ldim{\hyp}\). Furthermore, for any learner \(\alg\) there exists a (possibly random) adversary \(\cB\), such that \(\E \bs{\regret_{\alg}} \geq \frac{\ldim{\hyp}}{2}\), where expectation is taken with respect to the randomness in \(\cB\). However, the~\Gls{soa} learner is not restricted to output a hypothesis in \(\hyp\) while making its predictions. Such learning algorithms are classified as \emph{improper learners}, which we define formally below:

\begin{defn}[Proper and Improper learner]\label{def:proper}
    A learner \(\alg\) for a hypothesis class \(\hyp\) is called \emph{proper} if its output  is restricted to belong to \(\hyp\). Any learner that is not~\emph{proper} is called~\emph{improper}.
\end{defn}

It is worth noting that most learners in online learning are improper learners though their output may only be simple mixtures of hypothesis in~\(\hyp\)~\citep{hanneke2021online}. We illustrate the importance of improper learners with a simple hypothesis class that we heavily use in the rest of this paper.
\begin{defn}[Point class]\label{defn:point-class}
    For \(N \in \N_+\), for domain \(\cX = [N] \coloneqq \Set{1, \ldots, N}\), define
    \begin{equation}
        \point_N \coloneqq \Set{f^{(i)}, i \in [N]}, \quad \text{where} \quad f^{(i)}(x) = \bI\bc{i = x}.
    \end{equation}
\end{defn}
Note that \(\ldim{\point_N} = 1\) for any \(N > 1\). A simple algorithm to learn \(\point_N\) predicts \(0\) for every input until it makes a mistake. The input \(i\) where it incurred the mistake must correspond to the true target concept \(f^{(i)}\). This algorithm is improper as no hypothesis \(f\) in \(\point_N\) predicts \(0\) universally over the whole domain.

\subsection{Differential Privacy}
In this work, our goal is to study learners that satisfy the~\Gls{dp} guarantee~\citep{dwork2006calibrating} defined formally below.

\begin{defn}[Approximate differential privacy]\label{defn:dp}
    An algorithm \(\alg\) is said to be \((\varepsilon, \delta)\)-\Gls{dp}, if for any two input sequences \(\tau = \br{\br{x_1, y_1}, \ldots, \br{x_T, y_T}}\) and \(\tau'= \br{\br{x_1', y_1'}, \ldots, \br{x_T', y_T'}}\), such that there exists \textbf{only one} \(t\) with \((x_t, y_t) \neq (x_t', y_t')\), it holds that 
    \begin{equation}
        \Pr(\alg(\tau) \in S) \leq \exp(\varepsilon) \Pr(\alg(\tau') \in S) + \delta,
    \end{equation}
    where \(S\) is any set of possible outcomes.
\end{defn}
When \(\delta = 0\) we recover the definition of \emph{pure differential privacy}, denoted by \(\e\)-\Gls{dp}. 
Note that for online learner \(\alg\), output at step \(t\) depends only on first \(t - 1\) elements of the input sequence. In the setting of offline learning, the inputs \(\tau,\tau'\) can be thought of as two datasets of length \(T\) and \(\cA\) as the learning algorithm that outputs one hypothesis \(f\)~(not necessarily in \(\hyp\)). If \(\cA\) simulatanously satisfies~\Gls{dp} and is a~\Gls{pac} learner~\citep{valiant1984theory}, it defines the setting of~\Gls{dp}-\Gls{pac} learning~\citep{kasiviswanathan2011can}. However,~the setting of~\Gls{dp}-online learning is more nuanced due to two reasons.\\

\noindent\textbf{Privacy of Prediction or Privacy of Predictor} The first complexity arises from what the privacy adversary observes when altering an input, termed as its \emph{view}. Since \(\cA\) provides an output hypothesis \(\hat{f}_t\in\bc{0,1}^{\cX}\) at every time step \(t\in [T]\) as shown in~\Cref{def:general}, the adversary's view could encompass the entire list of predictors. Our work, like~\citet{golowich2021littlestone}, focuses on this scenario, where the output set is \(S\subseteq \Set{0, 1}^{\cX \times T}\). Nevertheless, certain studies restrict the adversary's view to only the predictions, excluding the predictors themselves~\citep{beimel2013private, dwork2018privacy,naor2023private}. In this setting, it is also important to assume that the adversary only observes the predictions on the inputs that it did not change~\citep{kearns2015robust,kaplan2023black}; thus, they have \(S\subseteq\bc{0,1}^{\br{T-1}}\).

\noindent\textbf{Oblivious and Adaptive adversary} The second complexity is about whether the online adversary \(\cB\) pre-selects all input points or adaptively chooses the next point based on the learner \(\cA\)'s previous response. Although the former, known as an \emph{oblivious} adversary, seems less potent, this difference does not manifest itself in non-private learning~\citep{cesa2006prediction}. However, this distinction becomes significant in the context of DP online learning. Adaptive adversaries, by design, leverage historical data in their decision-making process.
While works like~\citet{kaplan2023black} focus on adaptive adversaries, others like~\citet{kearns2015robust} concentrate on oblivious ones, and~\citet{golowich2021littlestone} examine both. Our contribution lies in setting lower bounds against the simpler scenario of oblivious adversaries.

\section{Related work}\label{sec:related}
Understanding which hypothesis classes can be privately learned is an area of vibrant research and was started in the context of Valiant's~\Gls{pac} learning model~\citep{valiant1984theory}. A hypothesis class \(\cH\) is considered~\Gls{pac}-learnable if there exists an algorithm \(\alg\), which can utilize a polynomial-sized\footnote{The term `polynomial-sized' refers to a sample size that is polynomial in the PAC parameters, including the error rate, confidence level, size of the hypothesis class, and the dimensionality of the input space.}, independent, and identically distributed (i.i.d.) sample \(D\) from any data distribution to produce a hypothesis \(h\in\cH\) that achieves a low classification error with high probability on that distribution. In the context of DP-PAC learning, as defined by ~\citet{kasiviswanathan2011can}, the learner \(\alg\) must also satisfy~\Gls{dp} constraint with respect to the sample \(D\). The overarching objective in this research domain is to find a clear criterion for the private learnability of hypothesis classes, analogous to the way learnability has been characterized in non-private settings—through the Vapnik-Chervonenkis (VC) dimension for offline learning~\citep{blumer1989learnability} and the Littlestone dimension for online learning~\citep{littlestone1988learning,ben2009agnostic}.

\citet{kasiviswanathan2011can} started this line of research by showing that the sample complexity of~\Gls{dp}-\Gls{pac} learning a hypothesis class  \(\cH\) is~\(\bigO{\log\br{\abs{\cH}
}}\).~\citet{beimel2014bounds} showed that the VC dimension does not dictate the sample complexity for proper pure~\Gls{dp}-\Gls{pac} learning of the \(\point_N\) class. However, they showed that if the setting is relaxed to improper learning then this sample complexity can be improved, thus showing a separation between proper and improper learning, something that is absent in the non-private~\Gls{pac} model.~\citet{beimel2013characterizing} sharpened this result by constructing a new complexity measure called~\emph{probabilistic representation dimension} and proving that this measure characterises improper pure~\Gls{dp} exactly.

 By leveraging advanced tools from communication complexity theory, they refined the understanding of the probabilistic representation dimension and demonstrated that the sample complexity for learning a notably simple hypothesis class, denoted as \(\line_p\), under approximate improper DP-PAC conditions, is significantly lower than the corresponding lower bound established for pure contexts.

Relaxing the notion of pure~\Gls{dp} to approximate~\Gls{dp},~\citet{beimel2013private} showed that the sample complexity for proper approximate~\Gls{dp}-\Gls{pac} learning can be significantly lower than proper~pure~\Gls{dp}-\Gls{pac} learning, thereby showing a separation between pure and approximate~\Gls{dp} in the context of proper~\Gls{dp}-\Gls{pac} learning. The inquiry into whether a similar discrepancy exists in improper DP-PAC learning was resolved by~\citet{feldman2014sample} who proved a separation between pure and approximate~\Gls{dp} in the improper~\Gls{dp}-\Gls{pac} learning model. To do this, they first proved a sharper characterisation of the probabilistic representation dimension using concepts from communication complexity. Then, they showed that the sample complexity for learning a notably simple hypothesis class, denoted as \(\line_p\), under approximate improper~\Gls{dp}-\Gls{pac} conditions, is significantly lower than the corresponding lower bound established for pure~\Gls{dp}.

 ~\citet{feldman2014sample} were also the first to obtain lower bounds for~\Gls{dp}-\Gls{pac} learning that grows as \(\Omega\br{\ldim{\cH}}\), albeit limited to the pure~\Gls{dp} setting.~\citet{alon2019private} showed that it is possible to obtain a lower bound for approximate~\Gls{dp} that grows as \(\Omega\br{\log^*\br{\ldim{\cH}}}\) thus marking a clear distinction between non-private and approximate \Gls{dp}-\Gls{pac} learning. This finding illustrated that DP-PAC learning's complexity could align with that of online learning, which is similarly governed by the Littlestone dimension. In a series of subsequent works, see~\cite{alon2022private, ghazi2021sample}, a surprising connection was established between private offline learning and non-private online learning. In particular, classes that are privately offline learnable are precisely those with finite Littlestone dimension.

This naturally highlights a similar question of private online learning, in particular whether~\Gls{dp} further limits which classes are learnable in the~\Gls{dp}-Online learning model.~\citet{golowich2021littlestone} provided an algorithm, called~\Gls{dpsoa}, which has expected number of mistakes growing as~\(O(2^{2^{\ldimop}} \log T)\). Interestingly, unlike~\Gls{soa} in the non-private online setting,~\Gls{dpsoa}'s mistake count increases with number of steps \(T\) in the online game. When considering adaptive adversaries, the upper bound on mistakes escalates to \(\bigO{\sqrt{T}}\). Under a slightly weaker definition of~\Gls{dp}, known as Challenge-DP, where the privacy adversary only sees the predictions and not the whole predictor function,~\citet{kaplan2023black} obtained an upper bound of \(\bigO{\log^2\br{T}}\) for both adaptive and oblivious adversaries. However, it is not clear from these works, whether the dependence on \(T\) is unavoidable. A related setting is that of~\emph{continual observation under~\Gls{dp}} where such a dependence is indeed unavoidable under the pure~\Gls{dp} model. However, the results from continual observation do not immediately transfer to online learning as discussed in~\Cref{sec:continual}.

\section{Lower Bound for Private Online learning under Concentration assumption}\label{sec:main}

In this section, we provide the main result of our work along with their proof. Before stating the main result in~\Cref{thm:main-finite}, we need to define the concept of~\emph{distinguishing tuple} and \(\conc{\beta}\) learners.

\begin{defn}\label{defn:dist-tuple}
    Given \(f_0, f_1 \in \hyp\) and \(\xeq, \xdif \in \cX\), we call the tuple \((f_0, f_1, \xeq, \xdif)\) \emph{distinguishing},
    if  it satisfies both \(f_0(\xeq) = f_1(\xeq)\) and \(f_0(\xeq) = f_0(\xdif) \neq f_1(\xdif)\).
\end{defn}
A distinguishing tuple means that there are two functions (\(f_0, f_1\)), and two input points (\(\xeq, \xdif\)), such that only one of these points can effectively differentiate between the two functions. The absence of a distinguishing tuple implies a restricted hypothesis class: either \(\mathcal{H}\) is a singleton (\(\Card{\hyp} = 1\)), or it contains precisely two inversely related functions ( \(\Card{\hyp} = 2\) with \(f_1 = 1 - f_0\)). In the latter case, \emph{every} input point contains information distinguishing \(f_1\) and \(f_2\).
This implies that there is no difference between input sequences, and the mistake bound will not depend on \(T\). For the purposes of our analysis, we proceed under the assumption that a distinguishing tuple always exists.

Let \((f_0, f_1, \xeq, \xdif)\) be a distinguishing tuple and suppose that adversary chooses \(f^{\ast} \in \Set{f_0, f_1}\).
Knowing only information on \(f^{\ast}(\xeq)\) does not help to tell apart \(f_0\) from \(f_1\).
Furthermore, if an algorithm is `too confident', meaning that it strongly prefers output of \(f_0\) on \(\xdif\) over output of \(f_1\), 
it will necessarily make a mistake on \(\xdif\) if \(f^{\ast} = f_1\). We will use this basic intuition to obtain our main lower bound and we will call such learners `concentrated', as defined below.

For input \(\tau\), index \(t \in [T]\) and an input point \(x \in \cX\), 
we denote \(\cA(\tau)_t \bs{x}\) to be the value of the \(t\)-th output function of \(\cA(\tau)\) evaluated at point \(x\).
\begin{defn}\label{defn:beta-conc}  
    An algorithm \(\alg\) is called \(\conc{\beta}\), 
    if there exists a distinguishing tuple \((f_0, f_1, \xeq, \xdif)\), such that
    \begin{equation}
        \Pr \bs{\forall t \in [T],\ \alg \br{\tau_0}_t\bs{\xdif} = f_0(\xdif)} \geq 1 - \beta,
    \end{equation}
    where \(\tau_0 = \br{\br{\xeq, f_0(\xeq)}, \ldots, \br{\xeq, f_0(\xeq)}}\).
\end{defn}
Note that \(\tau_0\) from~\cref{defn:beta-conc} is a `dummy', \emph{non-distinguishing} input, as it does not contain any information to distinguish \(f_0\) from \(f_1\).

\subsection{Main Result}\label{sec:finite_horizon_result}
We now show that if the learner is both differentially private and concentrated, it will necessarily suffer a large (logarithmic in \(T\)) number of mistakes  in the game of~\Cref{def:general}.
\begin{thm}\label{thm:main-finite}
Let \(\hyp\) be an arbitrary hypothesis class. Let \(\e > 0\), \(\delta \leq \e^2\) and \(T \leq \exp(1 / (32\delta))\).
If, for some \(\delta \leq \beta \leq 1/10\), \(\alg\) is a \(\beta\)-concentrated \((\e, \delta)\)-\Gls{dp} online learner of \(\hyp\), then there exists an adversary \(\cB\), such that
    \begin{equation}
        \E \bs{\regret_{\alg}} = \widetilde\Omega\br{\frac{\log T / \beta}{\e}},
    \end{equation}
    where \(\widetilde \Omega\) contains logarithmic in \(\e\) factors. For \(T > \exp(1 / (32\delta))\), \(\E \bs{\regret_{\alg}} = \widetilde\Omega\br{1/\delta}\).
\end{thm}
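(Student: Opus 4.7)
My plan is to combine \(\beta\)-concentration with the group-privacy implied by \((\e,\delta)\)-DP. By the concentration hypothesis, fix a distinguishing tuple \((f_0,f_1,\xeq,\xdif)\) such that on the all-\((\xeq,f_0(\xeq))\) input \(\tau_0\) the learner outputs \(f_0(\xdif)\) on \(\xdif\) at every round with probability at least \(1-\beta\). I will take the oblivious adversary to fix \(f^\ast=f_1\) and build an input sequence \(\tau\) that inserts \((\xdif,f_1(\xdif))\) at a chosen set of rounds \(t_1<\dots<t_K\subseteq[T]\) and keeps \((\xeq,f_0(\xeq))=(\xeq,f_1(\xeq))\) elsewhere. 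Because the shared label on \(\xeq\) is the same under \(f_0\) and \(f_1\), this is realisable under \(f^\ast=f_1\), and the only possible mistakes occur at the \(K\) inserted \(\xdif\)-rounds.

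The key estimate is obtained by applying group privacy to the joint "never-switch" event
\[
E = \bigset{\forall t\in[T],\ \cA(\cdot)_t[\xdif]=f_0(\xdif)}.
\]
Since \(\tau\) and \(\tau_0\) differ in exactly \(K\) coordinates, group privacy plus concentration give
\[
\Pr\bs{\cA(\tau)\in E} \;\ge\; e^{-K\e}(1-\beta)-\tfrac{e^{K\e}-1}{e^{\e}-1}\,\delta,
\]
and on \(E\) each of the \(K\) inserted \(\xdif\)-rounds is a mistake, so \(\E[\regret_\cA]\ge K\cdot\Pr[E\mid\tau]\). A slightly finer per-round version replaces \(K\) in the exponent by the prefix distance \(j-1\) and yields the sum \(\sum_{j=1}^{K}\bigbrac{e^{-(j-1)\e}(1-\beta)-(j-1)\delta}_{+}\). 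In either form, the regime split at \(T=\exp(1/(32\delta))\) is dictated exactly by when the additive \(\sim K\delta\) error begins to swamp the multiplicative \(e^{-K\e}\) factor: the condition \(K\delta\lesssim e^{-K\e}(1-\beta)\) is equivalent to \(K\e\lesssim\log(1/(K\delta))\), which saturates at \(K\e\sim 1/\delta\) — precisely the stated cutoff — and for \(T>\exp(1/(32\delta))\) the additive term caps \(K\) at \(\widetilde\Theta(1/\delta)\), producing the second branch \(\widetilde\Omega(1/\delta)\).

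The main technical obstacle will be to push the bound past the naïve value \(\max_K K\cdot e^{-K\e}=\Theta(1/\e)\) (attained at \(K\approx 1/\e\)) up to the claimed \(\widetilde\Omega(\log(T/\beta)/\e)\) rate. A single use of group privacy cannot do this because it multiplicatively pays \(e^{\e}\) for each inserted \(\xdif\), so I expect to need a hybrid/packing argument: partition \([T]\) into \(\Theta(\log T)\) epochs, schedule the \(\xdif\)-insertions so that each epoch contributes \(\Omega(1/\e)\) mistakes while consuming only an \(O(\e)\) share of the overall privacy budget, and then amortise the per-epoch concentration guarantees — in the spirit of the lower bound for DP continual observation discussed in \Cref{sec:continual}. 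Verifying that the per-epoch budget stays below the cutoff \(1/(32\delta)\) and tracking the \(\polylog(1/\e)\) slack hidden in \(\widetilde\Omega\) is where I expect the bulk of the calculation to lie.
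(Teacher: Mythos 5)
Your setup (distinguishing tuple from concentration, adversary fixes $f^\ast=f_1$, inserts $(\xdif,f_1(\xdif))$ at chosen rounds, group privacy against the all-$\xeq$ input $\tau_0$) matches the paper's, and your single-shot group-privacy estimate is essentially the paper's ``Case I'', which indeed tops out at $\Omega(\log(1/\beta)/\e)$ mistakes. You correctly flag that this cannot reach $\log T$. The gap is in the step you defer: the proposed fix --- a fixed partition of $[T]$ into $\Theta(\log T)$ epochs, each contributing $\Omega(1/\e)$ mistakes from ``per-epoch concentration guarantees'' --- does not work, because concentration is only assumed on the dummy input $\tau_0$. Once $K$ points have been inserted, the only control you have on the algorithm's behaviour is group privacy relative to $\tau_0$, which costs a factor $e^{K\e}$; after $O(\log(1/\beta)/\e)$ insertions this is vacuous, so nothing forces the learner to keep erring in later epochs --- it may simply switch to outputting $f_1$ after a constant number of mistakes, which is exactly the failure mode the paper points out for serial insertion.

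The missing idea is that the adversary must place each new $\xdif$ \emph{adaptively in the construction} (while remaining oblivious in the game): maintain an interval $[l_i,r_i]$, insert the $(i{+}1)$-th point at position $l_i$, and recurse into whichever half of $[l_i,r_i]$ the algorithm is \emph{less} likely to first ``anticipate'' the switch (output $f_1(\xdif)$ on $\xdif$). Because consecutive sequences $\inpseq^{(i)}$ and $\inpseq^{(i+1)}$ differ in a single coordinate, each insertion costs only one application of the $(\e_0,\delta)$ guarantee with $\e_0=\log(3/2)$, and taking the minimum over the two halves yields the contractive recursion $q_{i+1}\le \tfrac34 q_i+\delta/2$ for the anticipation probability, together with $p_{i+1}\ge p_i-2q_i$ for the probability of erring on the entire prefix. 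The additive privacy loss is then linear in the number of insertions ($p_k\ge 1/2-4k\delta$), not the group-privacy quantity $\tfrac{e^{K\e}-1}{e^\e-1}\delta$ you wrote; this linear accumulation is what produces the cutoff $k\le 1/(16\delta)$, i.e.\ $\log T\lesssim 1/\delta$, and the $\log T/\e$ rate is then recovered by inserting $d\approx\e_0/\e$ points per step for small $\e$ (or splitting into $s$ parts with $\log s=O(\e)$ for large $\e$). Without this adaptive halving, your argument establishes only the $\Omega(\log(1/\beta)/\e)$ half of the theorem.
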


 Before comparing our lower bound with known upper bounds, we first discuss the condition \(T\leq \exp(1 / (32\delta))\). Our bound suggests that for sufficiently large \(T\), specifically when \(T\) exceeds \(\exp(1 / (32\delta))\), the dependency on \(T\) is in fact not needed.
This can be seen from the following simple \emph{Name \& Shame} algorithm:  initialize an empty set \(S\); at each step, apply~\Gls{soa} and output \(\mathrm{\Gls{soa}}(S)\); upon receiving a new entry \((x_t, f^\ast(x_t))\), add it to \(S\) with probability \(\delta\). Clearly, this algorithm is \((0, \delta)\)-\Gls{dp}, since for any fixed input point, it only depends on this point with probability \(\delta\).
Furthermore, each time the algorithm incurs a mistake, it adds this mistake to \(S\) with probability \(\delta\). Since the algorithm runs \Gls{soa} on \(S\), it will make on expectation at most \(\ldim{\hyp} / \delta\) mistakes.
This algorithm is, however, not `conventionally' private, since it potentially discloses a \(\delta\)-fraction of the data, namely the set \(S\).

Now, we compare result of~\cref{thm:main-finite} with known upper bounds in~\Cref{fig:lower-bound}. Recall that~\Gls{dpsoa} in \citet{golowich2021littlestone} obtained an upper bound which increases logarithmically with the time horizon \(T\).~\Cref{fig:lower-bound} shows that for \(T \leq \exp(1 / (32\delta))\), the dependency on \(T\) is necessary, thereby showing the tightness of~\Gls{dpsoa}~\citep{golowich2021littlestone}. For  larger \(T\), the aforementioned \emph{Name \& Shame} actually outperforms~\Gls{dpsoa}, indicating that, for fixed \(\e, \delta\), \emph{online algorithm always compromises privacy at very large \(T\)}. However, even for smaller \(T\), the plot shows that the number of mistake must grow with increasing \(T\) until it matches the \emph{Name \& Shame} algorithm.

\begin{figure}[t]\centering\vspace{-20pt}
\includegraphics[width=0.2\linewidth]{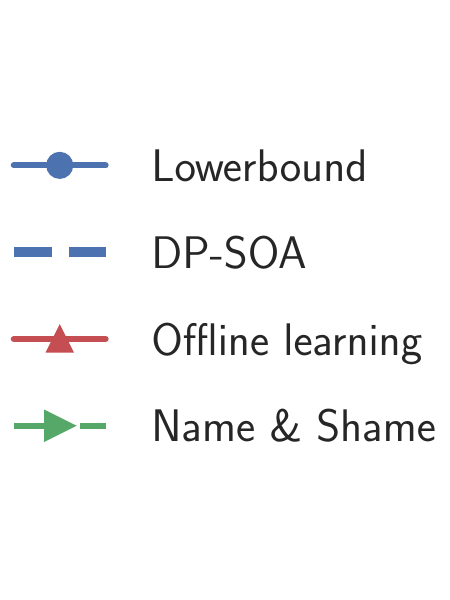}\quad\quad
\includegraphics[width=0.6\linewidth]{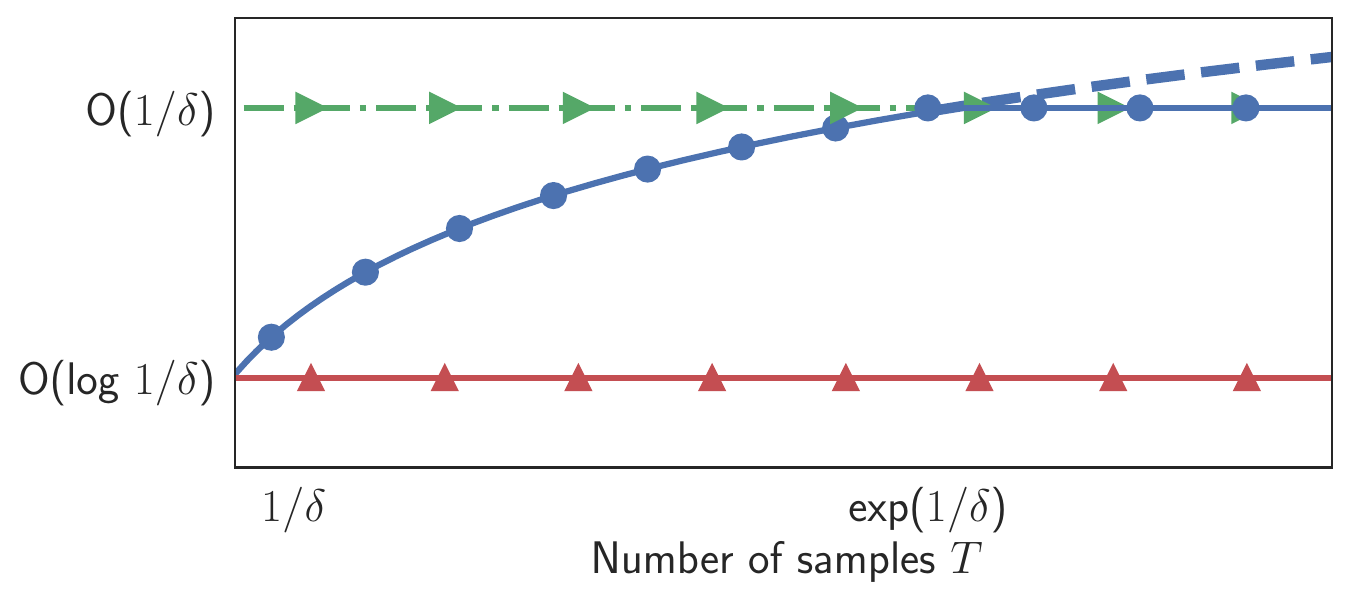}
\caption{\small Lower bound from~\cref{thm:main-finite} vs. existing upper bounds. We assume that \(\eps, \delta\) are fixed and for simplicity ignore dependence on \(\eps\). X-axis corresponds to the number of samples, growing from \(T \sim 1/\delta\) to \(T \sim \exp(1/\delta)\) and larger. Y-axis shows the expected number of mistakes, \(\E \bs{M_{\alg}}\).\vspace{-15pt}}
\label{fig:lower-bound}
\end{figure}

The main idea behind the construction of the lower bound is the following: assume that there is a distinguishing tuple \((f_0, f_1, \xeq, \xdif)\), such that \(\alg\) on \((\xeq, \ldots, \xeq)\) with high probability only outputs functions that are equal to \(f_0\) at \(\xdif\). Then, the adversary picks \(f^* = f_1\) and its goal is to select time steps \(t\) to insert \(\xdif\), such that (i) \(\alg\) with high probability will make a mistake at \(t\), and (ii) \(\alg\) will not be able to `extract a lot of information' from this mistake. As we show, both of these conditions can be guaranteed using the~\Gls{dp} property and concentration assumption of \(\alg\).
Note that if adversary inserts \(\xdif\) serially starting from \(t = 1\), \(\alg\) can have regret much smaller than \(\log T\). For example, if \(\alg\) first incurs a constant number of mistakes, the group privacy property allows a considerable change in the output distribution to only output \(f_1 = f^*\). But this is possible only if \(\alg\) can predict in advance, where it will make the mistake in the future.

Therefore, the adversary needs to `disperse' the points \(\xdif\) across \(T\) time steps, such that \(\alg\) cannot predict, where it might make the next mistake. We begin our construction by first inserting a point at the beginning. Then, depending on whether the leaner anticipates more points \(\xdif\) in the first half \emph{or} in the second half of the sequence, we insert \(\xdif\) in the half where it least expects it. Note that because of the concentration and~\Gls{dp} assumption, the learner cannot anticipate points \(\xdif\) in both halves simultaneously (recall that on input sequence consisting of only \(\xeq\), learner does not output a correct function for \(\xdif\)). By continuing this construction recursively, we are able to insert \(\Omega(\log T)\) points \(\xdif\), such that with constant probability, on each of them \(\alg\) will make a mistake.

\begin{proof}[of~\cref{thm:main-finite}]
    Since \(\alg\) is \(0.1\)-concentrated, there exists a distinguishing tuple \(\br{f_0, f_1, \xeq, \xdif}\), such that 
    \begin{equation}
        \Pr \bs{\forall t \in [T],\ \alg \br{\tau_0}_t\bs{\xdif} = f_0(\xdif)} \geq 0.9,
    \end{equation}
    where \(\tau_0 = \br{\br{\xeq, f_0(\xeq)}, \ldots, \br{\xeq, f_0(\xeq)}}\).
    WLOG assume that \(f_0(\xeq) = f_0(\xdif) = 0\).
    When \(T > \exp(1 / (32\delta))\), by simply only using the first \(\exp(1 / (32\delta))\) rounds, we obtain the required lower bound \(\widetilde\Omega(1 / \delta)\). In the remaining, we assume that \(T \leq \exp(1 / (32\delta))\).
    Furthermore, let \(k\) be the largest integer, such that \(2^k - 1 \leq T\). 
    Note that \(\frac{1}{2}\log{T} \leq k \leq 2\log T \leq 1/(16\delta)\). 
    For simplicity, we also assume that \(T = 2^k - 1\). We start with the case \(\e = \e_0 = \log\br{3/2}\) and pick \(f^\ast = f_1\) for the adversary.
    
    Note that to show \(\E \bs{M_\alg} = \Omega\br{\log T / \beta}\), we can construct two adversaries, first achieving~\(\E \bs{M_\alg} = \Omega\br{\log 1 / \beta}\)~(\textbf{Case I}) and the other~\(\E \bs{M_\alg} = \Omega\br{\log T}\)~(\textbf{Case II}). We now prove each of them.\\~\
    
    \noindent\textbf{Case I:}~We start with showing the adversary for the former bound, i.e., such that \(\E \bs{M_\alg} = \Omega\br{\log 1 / \beta}\).
    The concentration assumption implies that for any \(t \in [T]\), \(\Pr\br{\alg\br{\tau_0}_t\bs{\xdif} = 1} \leq \beta\).
    Therefore, if \(\tau_k\) contains \(k\) copies of the point \((\xdif, 1)\),  by applying~\Gls{dp} property of \(\alg\) \(k\) times, we obtain that for any \(t \in [T]\),
    \(\Pr\br{\alg\br{\tau_k}_t\bs{\xdif} = 1} \leq \beta \exp(k \e_0) + \delta \br{\frac{\exp\br{k\e_0} - 1}{\exp(\e_0) - 1}} \leq (k \delta + \beta)\exp(k \e_0)\).
    If \(k = \frac{1}{8 \e_0} \log (1 / \beta)\), we derive that for any \(t \in [T]\), 
    \begin{equation}
        \Pr(\alg(\tau_k)_t\bs{\xdif} = 1) \leq (k \delta + \beta) \exp(k \e_0) \leq \frac{\delta \log (1/\delta)}{8 \delta^{1/8} \e_0} \leq \frac{1}{3} \delta^{1/2} \log (1 / \delta) \leq \frac{1}{2}.
    \end{equation} This implies that for the sequence \(\tau_k\), expected number of mistakes is \(\E\bs{M_{\alg}} \geq \frac{1}{16 \e_0} \log (1 / \beta)\). 
    
    \noindent\textbf{Case II:} In the previous case, it did not matter where exactly the points \((\xdif, 1)\) are inserted. However, if we want to prove the lower bound \(\Omega(\log T)\), this is no longer true and one needs to be careful with the placement of the inserted points.

    In the following, we construct a sequence \(\inpseq = \br{x_1, \ldots, x_T}\), such that expected number of mistakes of \(\alg\) will be large. We proceed iteratively, maintaining scalar sequences \((l_i), (r_i)\), and sequences \(\inpseq^{(i)}\) such that 
    \begin{enumerate}
    \item \(\inpseq^{(i)}\) contains exactly \(i\) points \(\xdif\) on the prefix \([1, l_i - 1]\),
        \item \(p_i \coloneqq \Pr\bs{\forall t \in [1, l_i - 1], \alg(\inpseq^{(i)})_t\bs{\xdif} = 0} \geq 1/2 - 4i\delta\),
        \item \(q_i \coloneqq \Pr\bs{\forall t \in [1, l_i - 1], \alg(\inpseq^{(i)})_t\bs{\xdif} = 0 \textbf{ and } \exists t \in [l_i, r_i], \text{ s.t. } \alg(\inpseq^{(i)})_t\bs{\xdif} = 1} \leq 2\delta\).
    \end{enumerate}
    Assume that we obtain these sequences until \(i = k \geq \frac{1}{2} \log T\). Then on the event \(\Set{\forall t \in [1, l_k - 1], \alg(\inpseq^{(i)})_t\bs{\xdif} = 0}\), \(\alg\) will make \(k\) mistakes. Since \(k \leq 1/(16\delta)\), from the second property above \(p_k \geq 1/4\). This implies that for the sequence \(\inpseq^{(k)}\) we have \(\E\bs{\regret_{\alg}} \geq p_k k \geq \frac{1}{8} \log T\).
    
     We construct the sequences by induction, starting with \(\inpseq^{(0)} = \br{\xeq, \ldots, \xeq}\). For each \(i\), \(\inpseq^{(i)}\) will differ from \(\inpseq^{(i+1)}\) at exactly one point. This allows us to use \Gls{dp} property of \(\alg\) in order to compare outputs on \(\inpseq^{(i)}\) and \(\inpseq^{(i+1)}\). From \(\conc{\delta}\) assumption, we can pick \(l_0 = 1, r_0 = T\) which gives \(p_0 = 1\) and \(q_0 = 0.1\) (we interpret \(\Pr\bs{\forall t \in \emptyset \ldots} = 1\)).

    Given \(\inpseq^{(i)}\) we construct \(\inpseq^{(i+1)}\) by substituting \(l_i\)-th input point with \(\xdif
    \):
    \begin{enumerate}
        \item Let \((\inpseq^{(i+1)})_j = (\inpseq^{(i)})_j\) for all \(j \neq l_i\),
        \item Set \((\inpseq^{(i+1)})_{l_i} = \xdif\).
    \end{enumerate}
    Now we compute \(l_{i+1}, r_{i+1}\) and bound \(p_{i+1}, q_{i+1}\). To do this, we first introduce
    \begin{equation}
    \begin{aligned}
        &p'_i \coloneqq \Pr\bs{\forall t \in [1, l_i], \alg(\inpseq^{(i)})_t\bs{\xdif} = 0} \geq p_i - q_i, \\
        &q'_i \coloneqq \Pr\bs{\forall t \in [1, l_i], \alg(\inpseq^{(i)})_t\bs{\xdif} = 0 \textbf{ and } \exists t \in [l_i + 1, r_i], \text{ s.t. } \alg(\inpseq^{(i)})_t\bs{\xdif} = 1} \leq q_i,
    \end{aligned}
    \end{equation}
    which just account for the shift \(l_i \to l_i+1\). 
    Note that, since the events are nested, we can compute \(p'_i - q'_i = \Pr\bs{\forall t \in [1, r_i], \alg(\inpseq^{(i)})_t\bs{\xdif} = 0} = p_i - q_i\).
    Next, for \(m_i = (l_i + r_i) / 2\) and for any \(\bm{x} \in \Set{\xeq, \xdif}^T\), define
    \begin{equation}
    \begin{aligned}
        Q(\bm{x}) &\coloneqq \Set{\forall t \in [1, l_i], \alg(\bm{x})_t\bs{\xdif} = 0 \textbf{ and } \exists t \in [l_i + 1, r_i], \text{ s.t. } \alg(\bm{x})_t\bs{\xdif} = 1}, \\
        Q_1(\bm{x}) &\coloneqq \Set{\forall t \in [1, l_i], \alg(\bm{x})_t\bs{\xdif} = 0 \textbf{ and } \exists t \in [l_i + 1, m_i], \text{ s.t. } \alg(\bm{x})_t\bs{\xdif} = 1}, \\
        Q_2(\bm{x}) &\coloneqq \Set{\forall t \in [1, m_i], \alg(\bm{x})_t\bs{\xdif} = 0 \textbf{ and } \exists t \in [m_i + 1, r_i], \text{ s.t. } \alg(\bm{x})_t\bs{\xdif} = 1}.
    \end{aligned}
    \end{equation}
    Clearly, for any \(\bm{x}\), \(Q(\bm{x}) = Q_1(\bm{x}) \cup Q_2(\bm{x})\) with \(Q_1(\bm{x}) \cap Q_2(\bm{x}) = \emptyset\).
    Therefore,
    \begin{equation}
    \begin{aligned}
        q'_i = \Pr\bs{Q(\inpseq^{(i)})} = \Pr\bs{Q_1(\inpseq^{(i)})} + \Pr\bs{Q_2(\inpseq^{(i)})}.
    \end{aligned}
    \end{equation}
    We can use DP property of \(\alg\) when comparing outputs on \(\inpseq^{(i)}\) and \(\inpseq^{(i+1)}\) to get
    \begin{equation}\label{eq:q-i-plus-one}
    \begin{aligned}
    \min\br{\Pr\bs{Q_1(\inpseq^{(i+1)})}, \Pr\bs{Q_2(\inpseq^{(i+1)})}}
    &\leq \frac{1}{2} (\Pr\bs{Q_1(\inpseq^{(i+1)})} + \Pr\bs{Q_2(\inpseq^{(i+1)})})\\ 
    &= \frac{1}{2} \Pr\bs{Q(\inpseq^{(i+1)})} \leq \frac{1}{2} \br{\exp(\varepsilon_0) \Pr\bs{Q(\inpseq^{(i)})} + \delta} \\
    &= \exp(\varepsilon_0)q'_i /2 + \delta/2 \leq \frac{3}{4}q_i + \delta/2.
    \end{aligned}
    \end{equation}
    If \(\Pr\br{Q_1(\inpseq^{(i+1)})} \leq \Pr\br{Q_2(\inpseq^{(i+1)})}\) we set \(l_{i+1} \coloneqq l_i + 1, r_{i+1} \coloneqq m_i\), which gives \(q_{i+1} = \Pr\br{Q_1(\inpseq^{(i+1)})}\) and \(p_{i+1} = p'_i \geq p_i - q_i\) . 
    When \(\Pr\br{Q_1(\inpseq^{(i+1)})} > \Pr\br{Q_2(\inpseq^{(i+1)})}\), we set \(l_{i+1} = m_i + 1, r_{i+1} = r_i\), with \(q_{i+1} = \Pr\br{Q_2(\inpseq^{(i+1)})}\). We can bound 
    \begin{equation}
    \begin{aligned}
        p_{i+1} &= \Pr\br{\forall t \in [1, m_i], \alg(\inpseq^{(i+1)})_t\bs{\xdif} = 0} \\ &= \Pr\br{\forall t \in [1, l_i], \alg(\inpseq^{(i+1)})_t\bs{\xdif} = 0} \\
        &\quad - \Pr\br{\forall t \in [1, l_i], \alg(\inpseq^{(i+1)})_t\bs{\xdif} = 0 \textbf{ and } \exists t \in [l_i + 1, m_i], \text{ s.t. } \alg(\inpseq^{(i + 1)})_t\bs{\xdif} = 1} \\ 
        & = p'_i - \Pr\br{\forall t \in [1, l_i], \alg(\inpseq^{(i+1)})_t\bs{\xdif} = 0 \textbf{ and } \exists t \in [l_i + 1, m_i], \text{ s.t. } \alg(\inpseq^{(i + 1)})_t\bs{\xdif} = 1} 
        \\
        & \geq p'_i - 
        \frac{3}{2} \Pr\br{\forall t \in [1, l_i], \alg(\inpseq^{(i)})_t\bs{\xdif} = 0 \textbf{ and } \exists t \in [l_i + 1, m_i], \text{ s.t. } \alg(\inpseq^{(i)})_t\bs{\xdif} = 1}
        \\ &= p'_i - \frac{3}{2} q'_i = p_i - q_i -\frac{1}{2} q'_i \geq p_i - \frac{3}{2} q_i,
    \end{aligned}
    \end{equation}
    where we used the DP property for the first inequality.
    % , and the fact that \(p_i - q_i = \Pr\br{\forall t \in [1, r_i], \alg(\inpseq^{(i)})_t\bs{\xdif} = 0} = p'_i - q'_i\).
    % Otherwise, we set \(l_{i+1} = m_i + 1, r_{i+1} = r_i\), with \(q_{i+1} = \Pr\br{Q_2(\inpseq^{(i+1)})}\) and 
    % \(p_{i+1} \geq p'_i - q'_i \geq p_i - 2q_i\). 
    Overall, we obtain with our choice of \(l_{i+1}, r_{i+1}\):
    \begin{equation}
    \label{eq:q-i-p-i-recursion}
    q_{i+1} \leq \frac{3}{4} q_i + \delta / 2 \quad \text{and} \quad p_{i+1} \geq p_i - 2q_i.    
    \end{equation}
    By geometric sum properties, we have that \(q_i \leq \frac{1}{10} (3/4)^i + 2\delta\), and therefore, \(p_i \geq 1/2 - 4i \delta\).
    Finally, note that \(r_{i+1} - l_{i+1} = (r_i - l_i) / 2 - 1\). In the beginning we have \(r_0 - l_0 = 2^k - 2\), therefore \(r_i - l_i = 2^{k - i} - 2\). Thus, we can repeat this process \(\frac{1}{2} \log T \leq k \leq 2 \log T \leq 1/(16\delta)\) times. By construction, all inserted points \(\xdif\) for \(\inpseq^{(i)}\) lie on the prefix \([1, l_i - 1]\). In order to extend to other values of \(\e\), we either insert more points per step (if \(\e < \e_0\)), 
    or divide the segment into more parts (if \(\e > \e_0\)). We refer to~\Cref{app:all-eps} for the argument in these cases. This concludes the proof.
\end{proof}

\subsection{Examples of \(\conc{\beta}\) online learners for \(\point_N\)}
\label{sec:example-conc}
Next, we show that several learners that could be used for online learning \(\point_N\), satisfy~\cref{defn:beta-conc}. In particular, in~\Cref{lem:example-conc-learners} we prove that any improper learner that learns \(\point_N\) using a finite union of hypothesis in \(\point_N\)~(defined as Multi-point in~\Cref{defn:multi-point}) is concentrated. \Cref{lem:dpsoa-conc} implies that the only existing private online learning algorithm~\Gls{dpsoa} is concentrated.

\begin{defn}[Multi-Point class]\label{defn:multi-point}
     For \(1 \leq K \leq N \in \N_+\), we define \(
        \point^K_N\)
    to be the class of functions that are equal to 1 on at most \(K\) points.
\end{defn}
\begin{lem}\label{lem:example-conc-learners}
    Let \(\beta > 0\) and \(K, T \in \N_+\). 
    For any \(N \geq 3K T^2 / \beta\), any learner \(\alg\) of \(\point_N\) that only uses \(\point^K_N\) as its output set is \(\conc{\beta}\).
\end{lem}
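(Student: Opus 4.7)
The plan is to exhibit a distinguishing tuple of the form $(f^{(j)}, f^{(i)}, \xeq, i)$ with $i, j, \xeq$ three distinct points of $[N]$. Any distinguishing tuple in $\point_N$ must have this shape: the requirement $f_0(\xeq) = f_1(\xeq)$ forces $\xeq \notin \{j, i\}$ (otherwise exactly one of $f^{(j)}(\xeq), f^{(i)}(\xeq)$ equals $1$), and the requirement $f_0(\xdif) = 0 \neq 1 = f_1(\xdif)$ forces $\xdif = i$ and $j \neq i$. Thus the lemma reduces to locating an index $i \in [N]\setminus\{\xeq\}$ that, with probability at least $1-\beta$, never appears in the support of any output function of $\alg$ when $\alg$ is fed the constant, non-distinguishing sequence $\tau_0 = ((\xeq, 0), \ldots, (\xeq, 0))$ of length $T$.

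Fix any $\xeq \in [N]$ and write $\hat f_t := \alg(\tau_0)_t$ for the (random) $t$-th prediction function. Since $\alg$ outputs only functions in $\point^K_N$, each $\hat f_t$ has support of size at most $K$, so the random set $U := \bigcup_{t=1}^T \mathrm{supp}(\hat f_t)$ satisfies $|U| \leq KT$ almost surely. A double-counting identity then yields
\begin{equation*}
\sum_{i \in [N] \setminus \{\xeq\}} \Pr\bs{i \in U} \;\leq\; \sum_{i \in [N]} \Pr\bs{i \in U} \;=\; \E\bs{|U|} \;\leq\; KT,
\end{equation*}
so by pigeonhole there exists $i^\ast \in [N] \setminus \{\xeq\}$ with $\Pr[i^\ast \in U] \leq KT/(N-1)$. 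Under the hypothesis $N \geq 3KT^2/\beta$, this is comfortably at most $\beta$ (in fact smaller by a factor of order $T$, so the stated bound is loose by roughly that factor).

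To close the argument, pick any $j \in [N] \setminus \{i^\ast, \xeq\}$, which exists since $N \geq 3$. The tuple $(f^{(j)}, f^{(i^\ast)}, \xeq, i^\ast)$ is distinguishing in the sense of \cref{defn:dist-tuple}, and the event $\{\forall t \in [T],\ \alg(\tau_0)_t[\xdif] = f^{(j)}(\xdif)\}$ coincides with the complement of $\{i^\ast \in U\}$, hence has probability at least $1 - \beta$. This verifies \cref{defn:beta-conc}, so $\alg$ is $\conc{\beta}$. There is no real obstacle here: the entire argument is a pigeonhole on expected support sizes, and the only things worth double-checking are that $\tau_0$ is a legitimate input sequence (it is, because the label $0$ on $\xeq$ is consistent with every $f^{(j)}$ for $j \neq \xeq$) and that $N$ is large enough to afford three distinct indices, both of which follow immediately from the size assumption.
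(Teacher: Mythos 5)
Your proof is correct and follows essentially the same route as the paper's: both arguments feed $\alg$ the constant non-distinguishing sequence, double-count the total mass $\sum_x \Pr(\hat f_t(x)=1)\le K$ coming from the $\point_N^K$ output restriction, and use pigeonhole to find a point $\xdif$ outside (with high probability) all output supports. Your version is a slight streamlining — bounding $\E\bigl[\bigl|\bigcup_t \supp(\hat f_t)\bigr|\bigr]$ directly rather than the paper's two-step threshold-set-plus-union-bound — which even yields a bound smaller than $\beta$ by a factor of order $T$, but the underlying idea is identical.
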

\begin{proof}
    Set \(\xeq = 1\) and \(f_1 = \bI\bc{ \cdot = 2}\).
    Let \(\tau_0 = \br{\br{\xeq, f_1(\xeq)}, \ldots, \br{\xeq, f_1\br{\xeq}}}\) and
    \(\alg(\tau_0) = (\hat f_1, \ldots, \hat f_T)\).
    For \(t \in [T]\), let \(
        Q_t = \Set{x \in [N], \text{ such that } \Pr\br{\hat f_t(x) = 1} \geq \beta / T}\).
    We can write
    \begin{equation}
    \begin{aligned}
        &\sum_{x \in [N]} \Pr(\hat f_t(x) = 1) = \sum_{x \in [N]} \sum_{\substack{f \in \point^K_N \\ f(x) = 1}} \Pr(\hat f_t = f)
        \\
        &\quad = \sum_{f \in \point^K_N} \sum_{\substack{x \in [N] \\ f(x) = 1}} \Pr(\hat f_t = f) \leq K \sum_{f \in \point^K_N} \Pr(\hat f_t = f) = K,
    \end{aligned}
    \end{equation}
    which implies that \(\Card{Q_t} \leq K \lceil T / \beta \rceil\).
    Therefore, by union bound, as long as \(N \geq 3K T^2 / \beta\), there exists \(\xdif \in \left([N] \setminus \bigcup_{t=1}^T Q_t\right) \setminus \Set{1, 2}\).
    Applying union bound again, we obtain that 
    \begin{equation}
        \Pr(\exists t, \text{ such that } \hat f_t(\xdif) = 1) \leq \sum_{t=1}^T \Pr(\hat f_t(\xdif) = 1) \leq \beta.
    \end{equation}
    Thus, \(\alg\) is \(\conc{\beta}\) and the distinguishing tuple is \((f_1, \bI\bc{ \cdot = \xdif}, \xeq, \xdif)\).
\end{proof}

In~\Cref{lem:example-conc-learners} we assumed that \(K\) is a constant. Note that as long as \(K = o(N)\), by taking \(N\) large enough one can obtain the same result. We also remark that the~\Gls{soa} for \(\point_N\), which is an improper learner, uses \(\point_N^1\) as the output hypothesis class.~\citet{beimel2014bounds} also uses improper learners consisting of union of points for privately learning \(\point_N\) in the pure \Gls{dp}-\Gls{pac} model; their hypothesis class is a subset of the Multi-Point class.\footnote{ See Section 4.1 in~\citet{beimel2014bounds} to see the conditions that the hypothesis class needs to satisfy.} 

While their motivation was to explicitly construct a smaller hypothesis class that~\emph{approximates} \(\point_N\) well, on a more general note, it is also common to use larger hypothesis classes in computational learning theory for improper learning. This is particularly useful for benefits like computational efficiency~(\eg~3-DNF vs 3-CNF) and robustness~\citep{diakonikolas2019distribution}. The purpose of~\Cref{lem:example-conc-learners} is to show that natural improper learners using larger hypothesis classes for \(\point_N\), like~\Cref{defn:multi-point}, are also \(\mathrm{concentrated}\). Furthermore, as we show next, the only generic private online learner that we are aware of---~\Gls{dpsoa}, is also concentrated.
\begin{corollary}\label{lem:dpsoa-conc}
    \(\dpsoa\) for \(\point_N\) is \(\conc{\beta}\) for any \(N \geq 3 T^2 / \beta\).
\end{corollary}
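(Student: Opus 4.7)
The plan is to obtain this corollary as an immediate specialization of Lemma~\ref{lem:example-conc-learners} to the case $K = 1$. Substituting $K = 1$ into the lemma's hypothesis $N \geq 3KT^2/\beta$ gives precisely the condition $N \geq 3T^2/\beta$ appearing in the corollary. Thus the only thing left to verify is that $\dpsoa$, when run on $\point_N$, always outputs a predictor in $\point_N^1$, i.e.\ a function that is either identically zero or equal to $\bI\bc{\cdot = i}$ for a single $i \in [N]$.

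To check this I would unpack the construction of $\dpsoa$ from \citet{golowich2021littlestone} restricted to the point class. At each round $\dpsoa$ privately selects from a pool of $\soa$-consistent candidate predictors; for $\point_N$ this pool consists of the all-zero function together with the indicators $\bI\bc{\cdot = i}$ for those $i$ that have not yet been eliminated by an observation of the form $(i, 0)$. The noisy-argmax / exponential-mechanism style selection returns a single element of this pool per round and never combines two distinct indicators, so every output $\hat f_t$ lies in $\point_N^1$.

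With this structural fact established, applying Lemma~\ref{lem:example-conc-learners} with $K = 1$ immediately yields that $\dpsoa$ is $\conc{\beta}$ whenever $N \geq 3T^2/\beta$, and the distinguishing tuple is the one constructed in that lemma's proof. The only non-routine ingredient is therefore the verification that $\dpsoa$'s per-round output set is contained in $\point_N^1$; this is a bookkeeping exercise about the $\dpsoa$ algorithm and is where I expect essentially all of the effort to go, after which the corollary follows without further computation.
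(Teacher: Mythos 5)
Your proposal is correct and follows essentially the same route as the paper: both reduce to Lemma~\ref{lem:example-conc-learners} with $K=1$ and verify that every per-round output of $\dpsoa$ lies in $\point_N^1$, the paper doing so by noting that $\dpsoa$'s output always coincides with the output of $\soa$ on some input sequence, which for $\point_N$ is either the all-zero function or a single indicator.
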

\begin{proof}
    Output of \(\dpsoa\) is equal to the output of \(\soa\) algorithm on some input sequence. 
    Note that \(\soa\) for \(\point_N\) is always equal to either (i) an all-zero function, or (ii) a target function \(f^{\ast}\),
    which implies that it lies in \(\point^1_N\).
    Therefore, by~\cref{lem:example-conc-learners}, we obtain that \(\dpsoa\) is also \(\conc{\beta}\).
\end{proof}

\noindent\textbf{Independent work} In a concurrent and independent work,~\citet{cohen2024lower} established that learners that \emph{are not} concentrated must nevertheless suffer large expected regret for the specific case of \(\point_T\). They construct an adversary which strongly relies on the large size of the function class. In particular, generalizing for a smaller function classes, e.g.~\(\point_3\) remains an interesting open question. We obtain a partial progress in this direction, see~\Cref{sec:beyond_conc}.

\section{Discussion}
\label{sec:discussion}
\subsection{Connection to~\Gls{dp} under continual observation}\label{sec:continual}

\emph{Continual observation} under \Gls{dp}~\citep{dwork2010boosting} is the process of releasing statistics continuously on a stream of data while preserving~\Gls{dp} over the stream. One of simplest problems in this setting is~\Gls{dp}-Continual counting where a counter \(\cC: \bc{0,1}^T\to\bN_{+}^T\) is used.
We say \(\cC\) is deemed a \(\br{T,\alpha,\beta,\epsilon}\)-\Gls{dp} continual counter if \(\cC\) is \(\e\)-\Gls{dp} with respect to its input and with probability at least \(1-\beta\) satisfies
\begin{equation}\label{eq:continual-counter}
\max_{t\leq T} \abs{\cC\br{\tau}_t - \sum_{i\leq t}\tau_i}\leq \alpha.
\end{equation}
The proof of~\Cref{prop:continual} illustrates a straightforward method to convert~\Gls{dp} continual counters to~\Gls{dp} online learners for the \(\point_N\) hypothesis class, thereby transferring upper bounds from~\Gls{dp} continual counting to ~\Gls{dp} online learning. More precisely, the reduction results in a~\Gls{dp} online learning algorithm for \(\point_N\) with an additional \(\sqrt{N}\) factor to the privacy parameter and number of mistakes bounded by \(\alpha\). We believe this argument can also be extended to other finite classes by adjusting the mistake bound with an additional factor that depends on the size of the class.

 % \begin{restatable}{proposition}{continual}
 \begin{proposition}
 \label{prop:continual}
    For sufficiently small \(\epsilon,\beta\geq 0\) and any \(\alpha\geq 0\), let \(\cC\) be a \(\br{T,\alpha,\beta,\epsilon}\)-\Gls{dp} continual counter. Then, for any \(\delta>0\), an~\(\br{\epsilon',\delta}\)-\Gls{dp} online learner \(\cA\) for \(\point_N\) exists ensuring \(\bE\bs{\cM_{\alg}}\leq \alpha\) with \(\epsilon'=\epsilon\sqrt{3N\log\br{1 /\delta}}\) .
    % \end{restatable}
\end{proposition}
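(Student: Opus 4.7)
The plan is to instantiate $N$ independent copies $\cC_1, \ldots, \cC_N$ of the given $(T, \alpha, \beta, \epsilon)$-\Gls{dp} continual counter, one per point of the domain $[N]$, and build $\cA$ by thresholding their current estimates. Concretely, I feed $\cC_i$ the bit stream $b_{i,t} \coloneqq \bI\bc{x_t = i \text{ and } y_t = 1}$ revealed at the end of round $t$, and at round $t$ I output the predictor $\hat f_t(x) \coloneqq \bI\bc{\cC_x(t-1) > \alpha}$.

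For the mistake bound, I will exploit that the target $f^{(i^*)} \in \point_N$ activates only a single stream: $b_{i^*, \cdot}$ is $1$ precisely at rounds with $x_t = i^*$, while $b_{i, \cdot} \equiv 0$ for every $i \neq i^*$. On the (high-probability) accuracy event, every counter has additive error at most $\alpha$; hence $\cC_i(t) \leq \alpha$ for every $i \neq i^*$ and every $t$, so the predictor never fires a false positive. The only mistakes are false negatives on rounds with $x_t = i^*$ for which the running count of $i^*$ has not yet been large enough to push the noisy counter past $\alpha$. Since this running count strictly increases with each occurrence of $i^*$, there are at most $O(\alpha)$ such rounds, and tuning the threshold together with folding the $\beta$-probability failure event into the choice of $\beta$ yields $\bE\bs{\cM_\alg} \leq \alpha$.

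For privacy, each $\cC_i$, viewed as a mechanism on the learner's input $\tau$, is $\epsilon$-\Gls{dp}: a neighboring pair $\tau, \tau'$ differs in only one coordinate $(x_t, y_t)$, which changes each derived bit stream $b_{i, \cdot}$ in at most one position, so $\cC_i$ is fed neighboring bit streams and inherits the counter's $\epsilon$-\Gls{dp} guarantee. Since the predictor sequence $(\hat f_1, \ldots, \hat f_T)$ is a deterministic post-processing of $(\cC_1, \ldots, \cC_N)$, advanced composition over $N$ independent $\epsilon$-\Gls{dp} mechanisms, followed by post-processing, yields $(\epsilon', \delta)$-\Gls{dp} with $\epsilon' = \epsilon\sqrt{3N \log(1/\delta)}$, after absorbing the lower-order $N\epsilon(e^\epsilon - 1)$ term in the Dwork--Rothblum--Vadhan composition bound into the square-root under the assumption that $\epsilon$ is sufficiently small.

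The main obstacle I anticipate is essentially bookkeeping: tuning the threshold and the counter's confidence parameter $\beta$ so that the expected mistake count collapses to exactly $\alpha$ (rather than a constant multiple of it, with the residual $\beta T$ term from the failure event absorbed), and matching the advanced-composition constant to precisely $\sqrt{3N \log(1/\delta)}$. Conceptually, the construction---parallel per-point counters together with a simple threshold rule---is essentially forced by the singleton nature of the targets in $\point_N$, and should lift to arbitrary finite classes $\hyp$ by running one counter per hypothesis, at the price of a factor $\sqrt{\abs{\hyp}}$ in the privacy parameter.
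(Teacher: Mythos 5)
Your construction is essentially the paper's: one continual counter per hypothesis (here, per point of $[N]$), fed the indicator that the round witnesses that hypothesis labelling the example $1$, with a threshold-at-$\alpha$ rule to decide when to stop predicting all-zeros, and advanced composition over the $N$ counters for privacy. The only cosmetic difference is that you threshold each counter pointwise rather than switching wholesale to the first hypothesis whose counter fires, and both write-ups share the same looseness in the constants of the mistake bound (the paper likewise only sketches how the $\beta$-failure event and the factor-of-two in the threshold are absorbed).
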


Several works~(see~\citet{chan2011private}) have identified counters with \(\alpha=\bigO{\br{\log T}^{1.5}/\epsilon}\) which immediately implies a mistake bound that also scales as \(\poly\log \br{T}/\epsilon\) using~\Cref{prop:continual}. Ignoring\footnote{We do not optimise this as this dependence is not the focus of this work, for this argument consider \(N=\bigO{1}\). However, we believe it can be reduced using a~\Gls{dp} continual algorithm for~\textrm{MAXSUM}; see~\citet{jain2023price}} the dependence on~\(N\), this almost matches the lower bound proposed in~\Cref{thm:main-finite}. On the other hand,~\citet{dwork2010differential} have shown a lower bound of \(\Omega\br{\frac{\log T}{\epsilon}}\) for \(\alpha\) for any pure~\Gls{dp}-continual counter. However it is not clear how to convert the lower bound for~\Gls{dp}-continual learning to~\Gls{dp} online learning. This is because 1)~\Cref{eq:continual-counter} asks for a uniformly~(across \(t\)) accurate counter whereas the mistake bound in~\Cref{eq:mistake-bound} is a global measure and 2) this lower bound is for pure~\Gls{dp} whereas our setting is approximate~\Gls{dp}. We leave this question for the future work.

\subsection{Beyond concentration assumption}
\label{sec:beyond_conc}
In~\Cref{sec:finite_horizon_result}, we analyzed a class of algorithms, namely concentrated algorithms, containing the only existing~\Gls{dp} online learning algorithm,~\Gls{dpsoa}, 
and provided a matching logarithmic in \(T\) lower bound.
A natural question is whether this lower bound can be extended to \textit{any} learner. While we do not provide a general answer, we show that our construction can be made more general. Here, we analyze a different class of algorithms, called \emph{firing algorithms}.

For simplicity of exposition, consider the hypothesis class \(\point_3\), 
and let \(f^{\ast} \in \Set{f^{(1)}, f^{(2)}}\).
Let \(x^\ast\) be such that \(f^\ast = f^{(x^\ast)}\) and assume that adversary only considers sequences consisting of \((x^\ast, 1)\) and \((3, 0)\).
Let \(\cD\) be a distribution on \(\Set{\text{all-zero function}, f^{(1)}, f^{(2)}}\).
At each point \(t\), a firing algorithm \(\alg\) computes \(p_t = p_t(\#\text{mistakes up until }t) \in [0, 1]\) with \(p_t(0) = 0\), and samples a \(\xi_t \sim \mathrm{Bern}(p_t)\). If \(\xi_t = 1\), the algorithm commits to the correct hypothesis henceforth; otherwise, it outputs \(f_t \sim \cD\). Note that if the adversary introduces \((3, 0)\) at step \(t\), \(\alg\) is guaranteed not to err, and a single mistake suffices to identify (non-privately) the correct hypothesis.

When \(\cD\) has support on only one of \(\Set{\text{all-zero function}, f^{(1)}, f^{(2)}}\), it yields a \(\conc{0}\) algorithm.
Furthermore, the continual observation algorithm can also be viewed as a firing algorithm with a proper choice of \(\cD\). We analyze the opposite to the concentrated algorithms, in particular when \(\cD = \mathrm{Unif}\Set{f^{(1)}, f^{(2)}}\). We call such learners \emph{uniform firing algorithms} and we also obtain a logarithmic in \(T\) lower bound for them.

\begin{proposition}\label{prop:firing-lb}
    Let \(\alg\) be an \((\e, \delta)\)-\Gls{dp} uniform firing algorithm for class \(\point_3\).
    Then, if \(\log T = \bigO{1 / \delta}\), there exists an adversary, such that \(\E \bs{M_{\alg}} = \Omega(\log T)\). 
\end{proposition}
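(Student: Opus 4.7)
The plan is to adapt the recursive bisection construction from the proof of~\cref{thm:main-finite} to the uniform firing setting. Fix $f^{\ast} = f^{(1)}$ so that $x^{\ast} = 1$, and restrict the adversary to use only the two input symbols $(x^{\ast}, 1)$ and $(3, 0)$. Let $\sigma$ denote the (possibly infinite) first firing time of $\alg$. Two structural observations will drive the argument. First, the query $x = 3$ with label $0$ never causes a mistake because $f^{(1)}(3) = f^{(2)}(3) = f^{\ast}(3) = 0$, so mistakes can only occur at positions carrying $(x^{\ast}, 1)$. Second, at such a position $t$ the mistake probability factors as $\Pr\bs{\hat f_t \neq f^{\ast} \mid \tau} = \frac{1}{2}\Pr\bs{\sigma > t \mid \tau}$, since conditional on not having fired by step $t$ the output equals $f_t \sim \mathrm{Unif}\Set{f^{(1)}, f^{(2)}}$ and $f_t$ is drawn independently of $\Set{\sigma > t}$. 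Moreover, $\Set{\sigma > t}$ is determined by $\xi_1, \ldots, \xi_t$ and hence by the inputs at positions $1, \ldots, t - 1$ alone.

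Translating the theorem's quantities into this language, I set $p_i \coloneqq \Pr\bs{\sigma > l_i - 1 \mid \tau_i}$ (``no firing in the prefix'') and $q_i \coloneqq \Pr\bs{\sigma \in [l_i, r_i] \mid \tau_i}$ (``first firing lies in the window''), starting with $\tau_0 = ((3, 0), \ldots, (3, 0))$, $l_0 = 1$, $r_0 = T$; since $p_t(0) = 0$ and $\tau_0$ produces no mistakes, the algorithm never fires on $\tau_0$, giving $p_0 = 1$ and $q_0 = 0$. Inductively I would insert $(x^{\ast}, 1)$ at position $l_i$ to obtain $\tau_{i+1}$, split $\Set{\sigma \in [l_i + 1, r_i]}$ into $Q_1 = \Set{\sigma \in [l_i + 1, m_i]}$ and $Q_2 = \Set{\sigma \in [m_i + 1, r_i]}$ with $m_i = (l_i + r_i)/2$, and invoke the \Gls{dp} property of $\alg$ exactly as in~\cref{thm:main-finite} to obtain $\min\br{\Pr\bs{Q_1 \mid \tau_{i+1}}, \Pr\bs{Q_2 \mid \tau_{i+1}}} \leq \frac{3}{4}q_i + \frac{\delta}{2}$, choosing the smaller half as the new window. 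The second structural observation substitutes for the fact used in~\cref{thm:main-finite} that ``the algorithm's output at step $t$ depends only on inputs before $t$'': since $\tau_{i+1}$ and $\tau_i$ agree on positions $1, \ldots, l_i - 1$, we have $\Pr\bs{\sigma > l_i \mid \tau_{i+1}} = \Pr\bs{\sigma > l_i \mid \tau_i}$, and the matching recursion $p_{i+1} \geq p_i - 2q_i$ then follows verbatim in both branches of the bisection.

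Solving the two recurrences as in~\cref{thm:main-finite} (now with initial value $q_0 = 0$), after $k = \lfloor \tfrac{1}{2}\log T \rfloor$ iterations under the assumption $\log T = \bigO{1/\delta}$ I obtain $p_k \geq 1/2$. Every insertion position $i_j = l_{j-1}$ satisfies $i_j \leq l_k - 1$, so the inclusion $\Set{\sigma > l_k - 1} \subseteq \Set{\sigma > i_j}$ yields $\Pr\bs{\sigma > i_j \mid \tau_k} \geq p_k$; combined with the first-paragraph factorization this gives $\Pr\bs{\text{mistake at } i_j \mid \tau_k} \geq p_k/2 \geq 1/4$, and summing over $j$ produces $\E\bs{\regret_{\alg}} \geq k/4 = \Omega(\log T)$ for the adversary that plays $\tau_k$. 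The case $\e \neq \log(3/2)$ is handled exactly as in~\cref{thm:main-finite}, either by inserting several $(x^{\ast}, 1)$ copies per iteration (small $\e$) or by splitting $[l_i + 1, r_i]$ into more pieces (large $\e$). The main obstacle I anticipate is cleanly nailing down the two structural properties of $\sigma$ in the first paragraph---the mistake factorization and the restricted input-dependence of $\Set{\sigma > t}$---since these rely on the specific sequential structure of uniform firing algorithms (the independence of $f_t$ from $\xi_t$ and from the prior firing history); once they are in place, the remainder of the argument is a direct port of the bisection from~\cref{thm:main-finite}.
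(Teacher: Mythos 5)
There is a genuine gap at the heart of your recursion: you apply the $(\e,\delta)$-DP inequality to the events $Q_1=\Set{\sigma\in[l_i+1,m_i]}$ and $Q_2=\Set{\sigma\in[m_i+1,r_i]}$, but $\sigma$ is an \emph{internal} random variable of the algorithm, not a function of its output. Definition~\ref{defn:dp} only constrains $\Pr(\alg(\tau)\in S)$ for sets $S$ of output sequences, and for a uniform firing algorithm the firing time is not recoverable from the output: a stretch of outputs equal to $f^\ast$ may arise either because the algorithm fired or because the uniform draws happened to land on $f^\ast$. So the step $\min\br{\Pr\bs{Q_1\mid\tau_{i+1}},\Pr\bs{Q_2\mid\tau_{i+1}}}\le\frac34 q_i+\frac{\delta}{2}$ is not justified by the privacy of $\alg$, and this is precisely the obstacle the paper singles out as the main difficulty of this proposition. (Your other two structural observations --- the factorization $\Pr\bs{\hat f_t\neq f^\ast}=\frac12\Pr\bs{\sigma>t}$ at positions carrying $(x^\ast,1)$, and the fact that $\Set{\sigma>l_i}$ has the same probability under $\tau_i$ and $\tau_{i+1}$ because they agree on the prefix --- are fine.)

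The paper's fix is to replace the $\sigma$-events by genuine output events and to split each window into \emph{three} parts rather than two: the DP inequality is applied to events of the form ``$\alg$ outputs only $f^\ast$ starting from $r_i$'' intersected with whether or not $\alg$ outputs only $f^\ast$ on the middle third $[lm_i,rm_i]$. The middle segment, kept of length at least $\sqrt{T}$, acts as a detector: conditioned on not having fired, the probability of outputting all $f^\ast$ there is $2^{-\sqrt{T}}\le\delta$, so the output event ``all $f^\ast$ on $[lm_i,rm_i]$'' coincides with the firing event up to an additive $O(\delta)$, and the recursion is then run on the first or third segment while a sequence of conditioning events $G_i$ (with $\Pr(G_i)\ge 1-O(i\delta)$) records that the algorithm demonstrably did not fire. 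If you want to keep your two-way bisection on $\sigma$ itself, you would first have to prove a lemma transferring the output-level DP guarantee to the firing-time distribution, which is essentially what the three-way split accomplishes; without it the argument does not go through.
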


Proof is provided in~\Cref{app:uniform_lb} and is similar to the proof of~\Cref{thm:main-finite}, but requires a more delicate construction. 

\subsection{Pure differentially private online learners}

While we have so far focused on Approximate~\Gls{dp} with \(\delta > 0\), in this section we briefly discuss Online learning under pure~\Gls{dp}. 
Note that~\cref{lem:example-conc-learners} and~\cref{thm:main-finite} immediately imply the following lower bound on pure differentially private online learners for \(\point_N\).
\begin{corollary}\label{corr:pure-dp-lower}
    Let \(\e > 0\), \(\beta > 0\), \(K, T \in \N_+\) and \(N \geq 3KT^2 / \beta \).
    For any \(\e\)-\Gls{dp} learner \(\alg\) which uses only \(\point^K_N\) as its output set, 
    there exists an adversary \(\cB\), such that
    \begin{equation}
        \E \bs{M_{\alg}} = \Omega\br{\min\br{\frac{\log T / \beta}{\e}, T}}.
    \end{equation}
\end{corollary}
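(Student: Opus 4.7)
My plan is to deduce this corollary by directly chaining \cref{lem:example-conc-learners} and \cref{thm:main-finite}, using the fact that every $\e$-\Gls{dp} learner is $(\e, 0)$-\Gls{dp}, so that pure differential privacy is simply the $\delta = 0$ instance of the approximate-\Gls{dp} lower bound already proved.

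Concretely, the hypothesis $N \geq 3KT^2/\beta$ together with the restriction that $\alg$ outputs only functions in $\point_N^K$ matches the preconditions of \cref{lem:example-conc-learners} exactly, so the first step is to invoke that lemma and conclude that $\alg$ is $\conc{\beta}$. I would then feed the same $\alg$ into \cref{thm:main-finite} with $\delta = 0$. In this limit every numerical constraint simplifies away: $\delta \leq \e^2$ becomes $0 \leq \e^2$, $\delta \leq \beta$ becomes $0 \leq \beta$, and the horizon restriction $T \leq \exp(1/(32\delta))$ is vacuous, so the conclusion holds for every $T$ without any hidden upper bound. Reading off the conclusion yields $\E\bs{M_\alg} = \widetilde\Omega(\log(T/\beta)/\e)$, which is exactly the first term inside the $\min$ appearing in the corollary. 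The second term $T$ is automatic, since no learner can incur more than $T$ mistakes in $T$ rounds, so the bound saturates at $T$ whenever $\log(T/\beta)/\e$ would otherwise exceed it.

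The only precondition of \cref{thm:main-finite} that does not evaporate in the pure-\Gls{dp} limit is $\beta \leq 1/10$. For $\beta$ in that range the argument is immediate; for larger $\beta$ the claimed bound $\log(T/\beta)/\e$ is dominated by $\log(10T)/\e$, which I would recover by re-running the same two-step chain with the substitute parameter $\beta_0 = 1/10$ (which replaces the requirement on $N$ by the stronger $N \geq 30KT^2$, the natural regime of interest). I do not foresee any genuine obstacle here: the corollary is essentially a bookkeeping statement, and all the substantive work is already packaged inside \cref{thm:main-finite} and \cref{lem:example-conc-learners}.
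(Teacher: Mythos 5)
Your proposal is correct and takes essentially the same route as the paper: both proofs simply chain \cref{lem:example-conc-learners} (to get that \(\alg\) is \(\conc{\beta}\)) with \cref{thm:main-finite}. The only difference is that the paper instantiates the approximate-DP parameter as \(\delta = \beta\) (any \(\e\)-DP learner is \((\e,\beta)\)-DP), which keeps every hypothesis of \cref{thm:main-finite} non-degenerate, whereas your choice \(\delta = 0\) requires reading conditions such as \(T \leq \exp(1/(32\delta))\) in a limiting sense --- a cosmetic distinction that does not affect correctness.
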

\begin{proof}
    \cref{lem:example-conc-learners} implies that \(\alg\) must be \(\conc{\beta}\). Furthermore, since \(\alg\) is \(\e\)-\Gls{dp}, it is also \((\e, \beta)\)-\Gls{dp},
    and, thus, the existence of adversary with large regret follows from~\cref{thm:main-finite}.
\end{proof}

For \((\e, \delta)\)-\Gls{dp} online algorithms, there exists an upper bound provided by~\citet{golowich2021littlestone}. However, to the best of our knowledge, not much is known about \(\e\)-\Gls{dp} online algorithms. One way to obtain such algorithm is by leveraging existing results from the continual observation literature as done in~\Cref{prop:continual}. Under the same assumptions as in~\Cref{prop:continual}, using basic composition instead of advanced composition in the last step results in an \(\epsilon'\) scaling as \(N\epsilon\). However, this only works for~\(\point_N\) and not for more general classes. For arbitrary finite hypothesis classes, it is possible to use a~\Gls{dp} continual counter, similar to above to obtain a mistake bound that also scales with the size of the class. We also remark that the \(\mathsf{AboveThreshold}\) algorithm could be used to design learners for certain function classes. However, the question of whether generic learners can be designed remains open. Another interesting open question is whether the dependence on the size of the class \eg \(N\) for \(\point_N\), is necessary.

\subsection{Open problems}

This work relies on assumptions about properties~(Concentrated Assumption in~\Cref{defn:beta-conc} or uniform firing Assumption in~\Cref{sec:beyond_conc}) of the learning algorithms to show a lower bound for any hypothesis class. On the other hand,~\citet{cohen2024lower} do not need any assumption on their algorithm but their result only holds for large \(\point\) classes~(and it cannot immediately be transferred to small \(\point\) classes, e.g. \(\point_3\)). To fully settle the problem of lower bounds for online~\Gls{dp} learners, the limitations of both this work and that of~\citet{cohen2024lower} need to be addressed. We propose the following conjecture which we believe is true.
\begin{conjecture}[Lower bound for Approximate DP]
\label{conj:approx-dp-lower}
For any \(\e, \delta > 0\) and any \((\e, \delta)\)-\Gls{dp} learner \(\cA\) of \(\point_3\), there exists an adversary \(\cB\), such that \(\E \bs{M_{\cA}} = \Omega\br{\min\br{\frac{\log T / \delta}{\e}, 1 / \delta}}\).
\end{conjecture}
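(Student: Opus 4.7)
The plan is to adapt the recursive binary-search construction from the proof of~\Cref{thm:main-finite} to the uniform firing setting. The adversary fixes $f^{\ast} = f^{(1)}$ and restricts inputs to $\{(1,1), (3,0)\}$. Under the all-$(3,0)$ baseline $\tau^{(0)}$, no mistake is ever made, so $p_t(0) = 0$ throughout and the algorithm never fires; each $\hat f_t$ is an independent uniform draw from $\{f^{(1)}, f^{(2)}\}$.

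A uniform firing algorithm is \emph{not} $\conc{\beta}$ for any nontrivial $\beta$, so~\Cref{thm:main-finite} does not apply directly. The workaround is that the firing status $\mathcal{F}_t = \mathbf{1}[\text{fired by step } t]$ is essentially observable from the output sequence: after firing, the outputs are deterministically $f^{(1)}$, while before firing they are independent uniform draws on $\{f^{(1)}, f^{(2)}\}$. Reserving the second half $[T/2 + 1, T]$ of the horizon purely for ``firing detection'' lets one post-process the output on this tail to distinguish $\mathcal{F}_{T/2} = 0$ from $\mathcal{F}_{T/2} = 1$ with error at most $(1/2)^{T/2}$. By post-processing, this transfers the $(\varepsilon, \delta)$-\Gls{dp} guarantee on outputs into a $(\varepsilon, \delta + e^{-\Omega(T)})$-type guarantee on events of the form ``first fire in $[a, b]$'' for $b \leq T/2$.

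With this in hand, I would restrict the binary-search insertion entirely to $[1, T/2]$, constructing $\tau^{(0)}, \ldots, \tau^{(k)}$ with $k = \Theta(\log T)$ by inserting $(1,1)$ at positions $l_0 < l_1 < \cdots$, each chosen inside $[l_{i-1}+1, r_{i-1}] \subseteq [1, T/2]$ as the half of the bisection at midpoint $m_{i-1}$ with smaller first-fire probability under $\tau^{(i)}$. Maintain the invariants $p_i \coloneqq \Pr[\mathcal{F}_{l_i - 1} = 0 \mid \tau^{(i)}] \geq \tfrac{1}{2} - O(i\delta)$ and $q_i \coloneqq \Pr[\mathcal{F}_{l_i - 1} = 0 \text{ and first fire in } [l_i, r_i] \mid \tau^{(i)}] \leq O(\delta)$, with base case $p_0 = 1, q_0 = 0$ since $\tau^{(0)}$ yields no mistakes and hence no fires. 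Applying the transferred \Gls{dp} inequality at $\varepsilon = \log(3/2)$ to the first-fire event and taking the smaller half after bisection gives $q_{i+1} \leq (3/4) q_i + O(\delta)$, so $q_i = O(\delta)$ uniformly; the $p$-recursion $p_{i+1} \geq p_i - O(q_i)$ follows exactly as in~\Cref{thm:main-finite}, using that $\{\mathcal{F}_{l_i - 1} = 0\}$ depends only on inputs at steps strictly before $l_i$, which agree under $\tau^{(i)}$ and $\tau^{(i+1)}$.

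Under $\tau^{(k)}$, for every insertion $l_i$ we have $\Pr[\mathcal{F}_{l_i} = 0 \mid \tau^{(k)}] \geq p_{i+1} \geq 1/4$, by monotonicity of $\mathcal{F}_t$ and because this event depends only on inputs at steps $\leq l_i$, identical across $\tau^{(i+1)}$ and $\tau^{(k)}$. Conditional on $\mathcal{F}_{l_i} = 0$, the algorithm's output is uniform on $\{f^{(1)}, f^{(2)}\}$ and errs on the $(1,1)$ input with probability $1/2$, so $\E[M_{\alg}] \geq k/8 = \Omega(\log T)$. The hypothesis $\log T = O(1/\delta)$ is used precisely to ensure $k\delta = O(1)$ so that the $p_i$ invariant survives through all $k$ iterations; the extension to general $\varepsilon$ mirrors~\Cref{app:all-eps}. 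The main technical obstacle is the transfer of the \Gls{dp} guarantee from output events to firing events, since firing is an internal variable: the tail-observation trick sketched above makes this rigorous by post-processing, but forces one to confine the entire binary-search construction to the first half of $[1, T]$ so that the detection tail has enough length, and one must verify that the approximation error $(1/2)^{T/2}$ absorbs harmlessly into the existing $O(\delta)$ slack in the recursion.
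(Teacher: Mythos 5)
The statement you are trying to prove is \cref{conj:approx-dp-lower}, which the paper explicitly leaves \emph{open}: it is stated as a conjecture precisely because neither of the paper's two partial results (\cref{thm:main-finite} for \(\conc{\beta}\) learners, \cref{prop:firing-lb} for uniform firing learners) covers an arbitrary \((\e,\delta)\)-DP learner of \(\point_3\). Your argument does not close this gap, because it assumes from the first paragraph that \(\cA\) \emph{is} a uniform firing algorithm: you use (i) that on the all-\((3,0)\) baseline the predictions are i.i.d.\ uniform on \(\Set{f^{(1)},f^{(2)}}\), (ii) that there is a well-defined internal binary variable \(\mathcal F_t\) ("fired by step \(t\)") whose value determines the output law, and (iii) that conditional on \(\mathcal F_{l_i}=0\) the mistake probability on \((1,1)\) is exactly \(1/2\). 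None of these hold for a general learner. For instance, a learner that outputs the all-zero function with high probability on the dummy sequence is concentrated rather than firing (and is handled by \cref{thm:main-finite} via a different mechanism), and a generic randomized learner need not admit any fired/not-fired decomposition at all. What you have written is in substance a re-derivation of \cref{prop:firing-lb}, which the paper already proves in \cref{app:uniform_lb}, with a two-part bisection plus a reserved detection tail in place of the paper's three-part split with a middle detection segment.

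Two further points, secondary to the main gap. First, your tail-detection device is not quite sound as stated: the event you can post-process from the outputs is "all predictions on the tail equal \(f^{(1)}\)," which conflates \(\mathcal F_{T/2}=1\) with "fired at some point inside the tail," so the detector's error is not simply \((1/2)^{T/2}\); this is exactly the subtlety the paper handles with the good events \(G_i\) and the dedicated middle segment. Second, the conjecture asks for \(\Omega\br{\min\br{\log(T/\delta)/\e,\ 1/\delta}}\) for all \(T\), whereas your sketch only delivers \(\Omega(\log T)\) in the regime \(\log T = O(1/\delta)\) and does not address the \(1/\delta\) branch or the \(\log(1/\delta)\) term. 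To actually attack the conjecture one would need an adversary that works without any structural assumption on \(\cA\); the paper points to \citet{cohen2024lower} for an assumption-free argument on \(\point_T\) and notes that transferring it to \(\point_3\) is the open difficulty.
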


A straightforward implication of our main result is that the \(\log T\) lower bound also holds for pure~\Gls{dp} online learners under the same assumptions on the algorithm. In~\Cref{sec:continual}, we showed a generic reduction from~\Gls{dp} continual counters to~\Gls{dp} online learners for \(\point_N\) with regret \(\poly\log \br{T}\). This reduction can be extended to  pure~\Gls{dp} online learner by using basic composition in~\Cref{prop:continual}, with an additional cost of \(\sqrt{N}\). In~\Cref{conj-puredp-ub-lb}, we raise the question whether for small \(N\), the dependence on \(T\) can be lowered to \(\log T\) and shown to be tight.

% \begin{conjecture}[Upper bound for Pure DP]\label{conj-puredp-ub}
%     For any \(\e > 0\) there exists an \(\e\)-\Gls{dp} learner \(\cA\) of \(\point_3\), such that for any adversary \(\cB\), \(\E \bs{M_{\cA}} = O\br{\frac{\log T}{\e}}\).
% \end{conjecture}

% \begin{conjecture}[Lower bound for Pure DP]\label{conj-puredp-lb}
%     For any \(\e > 0\) and 
% \end{conjecture}

\begin{conjecture}[Upper and lower bounds for Pure DP]\label{conj-puredp-ub-lb}
    For any \(\e > 0\),\begin{enumerate}
        \item \emph{Upper bound: } 
        there exists  \(\e\)-\Gls{dp} learner of \(\point_3\), s.t. for any adversary, \(\E \bs{M} = O\br{\frac{\log T}{\e}}\).
        \item \emph{Lower bound:} for any \(\e\)-\Gls{dp} learner of \(\point_3\), there exists an adversary, s.t. \(\E \bs{M} = \Omega\br{\frac{\log T}{\e}}\).
    \end{enumerate} 
\end{conjecture}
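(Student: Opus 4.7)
I would prove part~1 by constructing the required \(\e\)-\Gls{dp} learner via the Sparse Vector Technique (SVT) with Laplace noise. The learner maintains counters \(c_i = \abs{\bc{s \leq t : x_s = i,\ y_s = 1}}\) for \(i \in \{1, 2, 3\}\). At each round, an SVT instance (with budget one positive answer, noise scale \(O(1/\e)\), threshold \(\tau = \Theta(\log T / \e)\)) checks whether any noisy \(c_i\) exceeds \(\tau\). Before the first trigger the learner outputs the all-zero function; from the triggering step onwards it commits to \(f^{(i)}\) for the triggering index \(i\). Pure \(\e\)-\Gls{dp} follows from the standard SVT analysis. Since only \(c_{i^\ast}\) can grow, a union bound over the \(3T\) queries guarantees probability \(1 - O(1/T)\) that the triggering index equals the target \(i^\ast\), giving \(O(1)\) expected post-commitment mistakes. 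Pre-commitment mistakes occur only when \(x_t = i^\ast\) and are bounded by the trigger time \(\tau + O(\log T / \e) = O(\log T / \e)\). Hence \(\E \bs{\regret_{\alg}} = O(\log T / \e)\).

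\textbf{Plan for the lower bound.} I would prove part~2 by adapting the recursive adversary construction of~\Cref{thm:main-finite} to the pure-\Gls{dp} regime. Taking \(\delta = 0\) removes the additive term in the \((q_i)\)-recursion of~\Cref{thm:main-finite} (see~\Cref{eq:q-i-p-i-recursion}), which then contracts purely geometrically \(q_{i+1} \leq e^{\e_0} q_i/2\) for \(\e_0 = \log(3/2)\); the truncation regime \(T \leq \exp(1/(32\delta))\) disappears, so the recursion can run the full \(\log T\) levels at every \(T\). Scaling to general \(\e\) follows the case analysis of~\Cref{app:all-eps}. The obstacle is that \Cref{lem:example-conc-learners} cannot seed this recursion: it requires \(N \geq 3KT^2/\beta\), which fails for \(N = 3\). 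I would handle this via a case split on the learner's output on the dummy input \(\tau_0 = ((3, 0), \ldots, (3, 0))\). Write \(S_i = \sum_{t=1}^T \Pr\bs{\hat f_t(i) = 1 \mid \tau_0}\) for \(i \in \{1, 2\}\). If some \(S_i \leq T/8\), a Markov-type argument yields a weak distinguishing tuple at \(\xdif = i\) strong enough to seed the recursion with \(f^\ast = f^{(i)}\). If both \(S_1, S_2 > T/8\), the learner is hedging between \(f^{(1)}\) and \(f^{(2)}\) over a linear fraction of time steps, and a direct adversary with \(f^\ast = f^{(1)}\) that inserts \((2, 1)\) at steps where \(\Pr\bs{\hat f_t(2) = 1 \mid \tau_0}\) is large should force \(\Omega(\log T / \e)\) mistakes via group privacy over \(O(\log T / \e)\) insertions.

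\textbf{Main obstacle.} The hardest part is the hedging case: group privacy transfers the \(\tau_0\)-marginals to modified inputs at multiplicative cost \(e^\e\) per inserted point, so the effective concentration parameter roughly doubles at each recursion level, and one must show this degradation remains controllable over the full \(\log T\)-depth recursion. This is where pure \Gls{dp} is genuinely more forgiving than approximate \Gls{dp}---there is no \(\delta\)-truncation to fight---but it is also where most of the technical work will lie: extending the \((p_i, q_i)\) bookkeeping of~\Cref{thm:main-finite} to the regime in which the learner's dummy-input distribution is not a priori tightly concentrated on a single hypothesis, while still matching the \(O(\log T / \e)\) upper bound from part~1.
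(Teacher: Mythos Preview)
The statement you are attempting is \emph{Conjecture}~\ref{conj-puredp-ub-lb}; the paper does not prove it. It is explicitly placed in the open-problems subsection, with the remark that the lower-bound half would follow from Conjecture~\ref{conj:approx-dp-lower} (itself open) or possibly from a transfer of the continual-observation lower bound---a transfer the paper says it does not know how to carry out (see~\Cref{sec:continual}). So there is no ``paper's own proof'' to compare against.

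On the substance of your plans:

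\textbf{Upper bound.} Your SVT sketch is plausible and is exactly the kind of approach the paper gestures at in~\Cref{sec:discussion} (``the \(\mathsf{AboveThreshold}\) algorithm could be used to design learners for certain function classes''). The sensitivity, accuracy, and pure-\Gls{dp} composition all line up for \(\point_3\), and the mistake count \(O(\log T/\e)\) would indeed beat the \(O((\log T)^{1.5}/\e)\) the paper obtains from continual counters via~\Cref{prop:continual}. If this works as written, it resolves part~1 of the conjecture; the paper simply did not carry this through.

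\textbf{Lower bound.} Your case split is the natural first move, but the ``hedging'' branch is where the actual open problem lives, and your sketch does not close it. Two concrete issues. First, a typo-level point: with \(f^\ast=f^{(1)}\) the adversary reveals \((2,0)\), not \((2,1)\); but more importantly, once the learner sees \((2,0)\) it has ruled out \(f^{(2)}\), so the \(\tau_0\)-marginals you are leveraging may collapse after a single insertion. Second, in your Case~1, \(S_i\le T/8\) only gives \(\Pr[\exists t:\hat f_t(i)=1\mid\tau_0]\le T/8\) via a union bound, which is far from the \(\beta\le 0.1\) concentration that seeds the recursion of~\Cref{thm:main-finite}; you would need a different invariant from the start, not just the \(\delta=0\) simplification of~\eqref{eq:q-i-p-i-recursion}. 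You correctly flag the hedging case as the main obstacle, but the group-privacy degradation you describe is precisely why neither the paper nor the concurrent work of~\citet{cohen2024lower} resolves the question for \(\point_3\). As it stands, this part is a research proposal, not a proof.
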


Note that the lower bound part of~\Cref{conj-puredp-ub-lb} follows from~\Cref{conj:approx-dp-lower}, but can also be viewed through connection to the continual observation model. In the latter regime, an \(\Omega(\log T)\) lower bound was shown in~\citet{dwork2014algorithmic}.

\section{Acknowledgement}
DD is supported by ETH AI Center doctoral fellowship and ETH Foundations of Data Science initiative. AS and DD thank Yevgeny Seldin for hosting them in University of Copenhagen which helped in the discussion of this work and Amir Yehudayoff for helpful discussions. DD also thanks Afonso Bandeira and Petar Nizić-Nikolac for fruitful discussions during the course of the project. In addition, authors would like to thank Xiaoyu Wang for pointing out a mistake in the proof of Theorem 1 in the earlier version which has now been fixed.

\bibliographystyle{abbrvnat}
\bibliography{biblio}

\appendix
\crefalias{section}{appendix} 

\clearpage
% \section{Proof of~\Cref{lem:example-conc-learners}}
% \label{app:conc-classes}

\section{Proof of~\Cref{prop:continual}}
\begin{proof} The proof for~\Cref{prop:continual} simply follows from first instantiating \(\abs{\cH}\) separate \(\br{T,\alpha,\beta,\epsilon'}\)-\Gls{dp} continual counters, one for each \(h\in\cH\). 

Then we construct the learning algorithm as follows: at each step \(t\), the learner outputs the hypothesis that predicts \(0\) uniformly. This is possible as the learning algorithm is improper. Then the adversary provides the learner with \(\br{x_t, f^*\br{x_t}}\). The algorithm then privately updates the counter \(\cC_{h}\) with \(b_h = \bI\bc{f^*\br{x_t} = h\br{x_t}}\wedge f^*\br{x_t}\). Then, as soon as any counter's value~(say \(\cC_{h^*}\)) surpasses \(\alpha\), only predict using \(h^*\).

Note that this algorithm only incurs a mistake at step \(t\) if \(b_h=1\) at step \(t\). However, every time \(b_h=1\), the counter for \(f^*\) gets updated by one. Thus, after \(\alpha\) updates it can be guaranteed with probability \(1-\beta\) that \(h^* = f^*\). To get the final expected mistake bound, set \(\beta = \alpha\) and this completes the proof.

The privacy proof follows from applying strong composition to the different counters.
\end{proof}
\section{Lower bound for a non-concentrated algorithm}
\label{app:uniform_lb}
Consider the problem of learning function class \(\point_3\), 
such that \(f^{\ast} \in \Set{f^{(1)}, f^{(2)}}\).
Let \(x^\ast\) be such that \(f^\ast = f^{(x^\ast)}\) and assume that the adversary only considers sequences consisting of \((x^\ast, 1)\) and \((3, 0)\). In this section, we show a lower bound for one special type of non-concentrated algorithm:~\emph{firing algorithms}. 

At each round \(t\), the algorithm computes \(p_t = p_t\br{\#\text{mistakes up until }t} \in [0, 1]\) with \(p_t(0) = 0\), and samples \(\xi_t \sim \mathrm{Bern}(p_t)\). If \(\xi_t = 1\), from this point on, the  algorithm always outputs the correct hypothesis. In this case, we say that \(\alg\)  has `fired' at step \(t\). Otherwise, it outputs \(f_t \sim \mathrm{Unif}\{f^{(1)}, f^{(2)}\}\). We call such learners \emph{uniform firing algorithms}.
Note that \(\alg\) never errs on points \((3, 0)\), since both \(f^{(1)}\) and \(f^{(2)}\) are equal to zero at \(x = 3\).
Essentially, we analyze a distinguishing tuple \((f^{(1)}, f^{(2)}, 3, x^{\ast})\).

\begin{proof}[of~\Cref{prop:firing-lb}]
WLOG, we assume \(\delta > 1/T^2\).
For the elements of the input sequence, we denote \(0 \coloneqq (3, 0)\) and \(1 \coloneqq (x^\ast, 1)\).
Similar to the proof of~\cref{thm:main-finite}, we proceed by creating sequences \(\tau_i\), with \(\tau_0 = (0, \ldots, 0)\) and \(\tau_i\) containing exactly \(i\) copies of \(1\).

Assume that there is a scalar sequence \((r_i)\) and a sequence of subset of events \((G_i)\), such that
\begin{enumerate}
    \item \(\Pr(G_i) \geq 1 - \bigO{i \delta} \geq 1/2\),
    \item \(\Pr(\alg(\tau_i)\text{ outputs only }f^\ast \text{ starting from }r_i \mid G_i) = \bigO{\delta}\).
\end{enumerate}
This implies that we can upper bound probability that \(\alg\) outputs \(f^*\) for any \(t \leq r_i\) as follows (for a subset of events \(A\), we denote its complement by \(\overline{A}\) and \(\Pr_i (A) \coloneqq \Pr(A \mid G_i)\)):
\begin{equation}\label{eq:marginal}
\begin{aligned}
    \Pr(\alg(\tau_i)_t = f^\ast) & = \bP_i(\alg(\tau_i)_t = f^\ast)\Pr(G_i) + \Pr(\alg(\tau_i)_t = f^\ast \mid \overline{G_i})\Pr(\overline{G_i}) \\
    & \leq \bP_i(\alg(\tau_i)_t = f^\ast) + O(i \delta) \\
    &= O(i\delta) + \bP_i(\alg(\tau_i)_t = f^\ast \mid\alg \text{ `fired' before }r_i)\bP_i(\alg(\tau_i) \text{ `fired' before }r_i) \\
    &\qquad + \bP_i(\alg(\tau_i)_t = f^\ast \mid \alg(\tau_i) \text{ did not `fire' before }r_i)\bP_i(\alg(\tau_i) \text{ did not `fire' before }r_i)  \\
    &\leq \bP_i(\alg(\tau_i) \text{ `fired' before }r_i) + \frac{1}{2} + O(i\delta) \\
    &\leq \bP_i(\alg(\tau_i)\text{ outputs only~}f^\ast\text{ starting from }r_i) + \frac{1}{2} + O(i\delta)\\
    &= \frac{1}{2} + O(i \delta).
\end{aligned}
\end{equation}
For \(i = O(1 / \delta)\), we obtain that \(\Pr(\alg(\tau_i)_t \neq f^*) = \Omega(1)\).
If \(\tau_i\) contains exactly \(i\) points \((x^\ast, 1)\) on the prefix \([1, r_i]\), then, by picking \(i = \Omega(\log T)\) (recall that \(\log T = O(1 / \delta))\) and summing the latter probabilities, we obtain that \(\E \bs{M_\alg} = \Omega(\log T)\).

We now construct sequences \((r_i), (G_i), (\tau_i)\), also including a scalar sequence \((l_i)\), such that \([l_i, r_i]\) are nested segments and \((r_i), (G_i)\) satisfy the aforementioned requirements. We proceed by induction. First, define \(l_0 = 1\), \(r_0 = T/2\), \(\tau_0 = (0, \ldots, 0)\), and \(G_0 = \Omega\), i.e., so that \(\Pr(G_0) = 1\). Note that \(\Pr(\alg(\tau_0)\text{ outputs only }f^\ast \text{ starting from }r_0) = 2^{-T/2} \leq \delta\).
Next, assume that we already defined \(l_i, r_i, G_i\).
We divide a segment \([l_i, r_i]\) into three parts \([l_i, lm_i], [lm_i, rm_i], [rm_i, r_i]\), with \(lm_i = l_i + (r_i - l_i) / 3\) and \(rm_i = l_i + 2(r_i - l_i) / 3\). We have the following decomposition:
\begin{equation}
    \begin{aligned}
        &\bP_i(\alg(\tau_i)\text{ outputs only }f^\ast \text{ starting from }r_i) \\
        &\quad = \bP_i(\alg(\tau_i)\text{ outputs only }f^\ast \text{ starting from }r_i \textbf{ and } \text{ outputs only }f^\ast \text{ starting from }[lm_i, rm_i]) \\
        &\qquad+\bP_i(\alg(\tau_i)\text{ outputs only }f^\ast \text{ starting from }r_i \textbf{ and } \text{does not only output }f^\ast \text{ on }[lm_i, rm_i]).
    \end{aligned}
\end{equation}

As long as \(rm_i - lm_i \geq \sqrt{T}\), we obtain
\begin{equation}
    \begin{aligned}
        &\bP_i(\alg(\tau_i)\text{ outputs only }f^\ast \text{ on }[lm_i, rm_i]) \\
        &\quad = \bP_i(\alg(\tau_i)\text{ fires before }lm_i) + \sum_{t=1}^{rm_i - lm_i}2^{-t} \bP_i(\alg(\tau_i)\text{ fires on }lm_i+t) + 2^{-(lm_i - rm_i)}  \\
        &\quad \leq 2\bP_i(\alg(\tau_i)\text{ outputs only }f^\ast \text{ starting from }r_i) + 2^{-\sqrt{T}}.
    \end{aligned}
\end{equation}

We let \(\tau_{i+1} = \tau_i\), except that \((\tau_{i+1})_{l_i} = 1\). Then, using DP property we have that 
\begin{equation}
\begin{aligned}
    &\bP_i(\alg(\tau_{i+1})\text{ outputs only }f^\ast \text{ starting from }r_i) \\
    &\quad \leq \exp(\varepsilon) \bP_i(\alg(\tau_{i})\text{ outputs only }f^\ast \text{ starting from }r_i) + \delta / \Pr(G_i) \\
    &\quad \leq \exp(\varepsilon) \bP_i(\alg(\tau_{i})\text{ outputs only }f^\ast \text{ starting from }r_i) + 2\delta.
\end{aligned}
\end{equation}
Therefore, either
\begin{equation}
\begin{aligned}
&\Pr(\alg(\tau_{i+1})\text{ outputs only }f^\ast \text{ starting from }r_i \textbf{ and } \text{ outputs only }f^\ast \text{ on }[lm_i, rm_i])\\
&\quad \leq \frac{1}{2}(\exp(\varepsilon) \Pr(\alg(\tau_{i})\text{ outputs only }f^\ast \text{ starting from }r_i) + \delta),
\end{aligned}
\end{equation}
or 
\begin{equation}
\begin{aligned}
&\Pr(\alg(\tau_{i+1}\text{ outputs only }f^\ast \text{ starting from }r_i \textbf{ and } \text{does not only output }f^\ast \text{ on }[lm_i, rm_i])\\
&\quad \leq \frac{1}{2}(\exp(\varepsilon) \Pr(\alg(\tau_{i})\text{ outputs only }f^\ast \text{ starting from }r_i) + \delta).
\end{aligned}
\end{equation}
In the former case, we can reiterate with \(l_{i+1} = l_i + 1\), \(r_{i+1} = lm_i\), and \(G_{i+1} = G_i\), using that 
\begin{equation}
    \Pr(\alg(\tau_{i+1})\text{ outputs only }f^\ast \text{ starting from }lm_i) \leq \Pr(\alg(\tau_{i+1})\text{ outputs all ones on }[lm_i, rm_i]).
\end{equation}
In the latter case, we set \(G_{i+1} = G_i \cap \Set{\alg(\tau_{i+1})\text{ does not only output }f^\ast \text{ on }[lm_i, rm_i]}\).
Note that \(\Pr(G_{i+1}) = \Pr(G_i) \bP_i (\alg(\tau_{i+1})\text{ does not only output }f^\ast \text{ on }[lm_i, rm_i]) \geq 1 - O((i+1)\delta)\).
We reiterate with \(l_{i+1} = rm_i\) and \(r_{i+1} = r_i\). 

Altogether, as long as \(rm_i - lm_i \geq \sqrt{T}\), we can always maintain that 
\begin{enumerate}
    \item \(\Pr(G_{i+1}) \geq 1 - O(i \delta)\),
    \item \(\Pr(\alg(\tau_i)\text{ outputs only }f^\ast \text{ starting from }r_i \mid G_i) = O(\delta).\)
\end{enumerate}
We can therefore continue this process at least \(\Omega(\log T)\) times, which implies a large number of mistakes as discussed before.
\end{proof}

Proof of~\Cref{prop:firing-lb} resembles the proof of~\cref{thm:main-finite} with several important differences.
The main difficulty comes from the fact that by just looking at the output of the algorithm, it is impossible to say with certainty, whether it `fired' or not.
And we need to do this, in order to bound the marginal probability of outputting \(f^\ast\) (which is \(1/2\) if \(\alg\) did not fire and \(1\) if it fired), see~\cref{eq:marginal}.

To overcome this issue, instead of splitting the segment into two parts and reiterating in either left or right,
we split into three parts and reiterate in the first or the third.
Second segment plays a special role, to 'detect' whether algorithm fired or not.
From the properties of the algorithm, if the full segment is equal to all \(f^\ast\),
we can confidently say that algorithm fired before this segment.
Event \(G_i\) corresponds to a 'good' event, meaning that \(\alg\) did not fire and \emph{did not look like it fired}, i.e., in the middle segment there exist one output not equal to \(f^\ast\).

Finally, we would like to highlight that there is nothing specific about the choice \(\cD = \mathrm{Unif}(f^{(1)}, f^{(2)})\).
The main property we needed for the proof is that \[\Pr(\alg \text{ outputs all } f^\ast \text{ on a specific subsequence of large enough length }\mid \alg \text{ did not fire }) \leq \delta.\]
Therefore, we expect that similar technique works for other choices of \(\cD\), beyond a uniform distribution.

 \section{Extending finite lower bound to other values of \(\e\)}
\label{app:all-eps}
\paragraph{Smaller \(\e\)}
Let \(d \in \N_+\) and \(\e = \e_0 / d\), where \(\e_0 = \log(3/2)\).
WLOG we assume that \(T = d (2^k - 1)\). 
We repeat the same construction as in the proof of~\cref{thm:main-finite} for the case \(\e_0\),
but now instead of inserting only one point at each step, we insert \(d\) points.
Note that this way sequences \(\inpseq^{(i+1)}\) and \(\inpseq^{(i)}\) are \(d\) points away from each other, therefore, by applying \Gls{dp} property of \(\alg\) \(d\) times, 
we obtain the following equivalent version of~\cref{eq:q-i-plus-one}:
\begin{equation}
    \min\br{\Pr\bs{Q_1(\inpseq^{(i+1)})}, \Pr\bs{Q_2(\inpseq^{(i+1)})}}
    \leq \frac{1}{2} \br{\exp(\varepsilon_0) \Pr\bs{Q(\inpseq^{(i)})} + A\delta},
\end{equation}
where \(A = 1 + \exp(\e_0 / d) + \exp(2\e_0 / d) + \ldots + \exp((d - 1) \e_0 / d) \leq d \exp(\eps_0).\)
Therefore, multiplying \(\delta\) by a factor \(d \exp(\eps_0)\), we recover the previous setting. 
Since only \(\log 1 / \delta\) appears in the final bound, and since \(\eps_0\) is a constant, we only suffer an extra \(\log d \sim \log 1 / \e\) term,
which is subleading.

\paragraph{Larger \(\e\)}
When \(\e > \e_0\), instead of dividing the sequence in two parts, we will divide it into \(s\) parts, for some integer \(s \geq 3\), obtaining
\begin{equation}
    \min\br{\Pr\bs{Q_1(\inpseq^{(i+1)})}, \ldots, \Pr\bs{Q_s(\inpseq^{(i+1)})}}
    \leq \frac{1}{s} \br{\exp(\e) \Pr\bs{Q(\inpseq^{(i)})} + \delta}.
\end{equation}
We need to ensure that \(\exp(\e) / s \leq 3/4\), in order for the proof to go through, which follows if \(\log s = O(\e)\). 
Finally, note that instead of repeating process \(\Omega(\log T)\) times, we repeat \(\Omega\br{\frac{\log T}{\log s}} = \Omega\br{\frac{\log T}{\e}}\) times,
which matches the required mistake bound.

\end{document}